\pgfplotsset{compat=1.18}
\newcommand{\xs}{\mathbf{x}}
\newcommand{\eL}[1][t]{\mathcal{L}_{#1}}
\newcommand{\fhat}[1][t]{\widehat{f}_{#1}}
\newcommand{\ftilde}[1][t]{\widetilde{f}_{#1}}
\newcommand{\Cset}[1][t]{\mathcal{C}_t(\delta)}
\newcommand{\gtau}{\mathfrak{g}(\tau)}
\newcommand{\Hop}[1][t]{{\widetilde{V}_{#1}}}
\newcommand{\algname}{\textnormal{\texttt{Generalized Kernelized Bandits-Upper Confidence Bounds}}\@\xspace}
\newcommand{\algnameshort}{\textnormal{\texttt{GKB-UCB}}\@\xspace}
\newcommand{\algnameeff}{\textnormal{\texttt{Tractable-Generalized Kernelized Bandits-Upper Confidence Bounds}}\@\xspace}
\newcommand{\algnameeffshort}{\textnormal{\texttt{Trac-GKB-UCB}}\@\xspace}
\renewcommand{\citet}{\cite}
\begin{document}

\setlength{\abovedisplayskip}{5pt}
\setlength{\belowdisplayskip}{5pt}
\setlength{\textfloatsep}{6pt}

\title{Generalized Kernelized Bandits: A Novel Self-Normalized Bernstein-Like Dimension-Free Inequality and Regret Bounds}

\author{\name Alberto Maria Metelli \email albertomaria.metelli@polimi.it\\
       \name Simone Drago \email simone.drago@polimi.it\\
       \name Marco Mussi \email marco.mussi@polimi.it\\
       \addr Politecnico di Milano\\
       Piazza Leonardo da Vinci 32, Milan, 20133, Italy}

\editor{}

\maketitle

\begin{abstract}
We study the regret minimization problem in the novel setting of \emph{generalized kernelized bandits} (GKBs), where we optimize an unknown function $f^*$ belonging to a \emph{reproducing kernel Hilbert space} (RKHS) having access to samples generated by an \emph{exponential family} (EF) reward model whose mean is a non-linear function $\mu(f^*)$. This setting extends both \emph{kernelized bandits} (KBs) and \emph{generalized linear bandits} (GLBs), providing a \emph{unified view} of both settings. We propose an optimistic regret minimization algorithm, \algnameshort, and we explain why existing self-normalized concentration inequalities used for KBs and GLBs do not allow to provide tight regret guarantees. For this reason, we devise a novel self-normalized Bernstein-like dimension-free inequality that applies to a Hilbert space of functions with bounded norm, representing a contribution of independent interest. Based on it, we analyze \algnameshort, deriving a regret bound of order $\widetilde{O}( \gamma_T \sqrt{T/\kappa_*})$, being $T$ the learning horizon, ${\gamma}_T$ the maximal information gain, and $\kappa_*$ a term characterizing the magnitude of the expected reward non-linearity. Our result is tight in its dependence on $T$, $\gamma_T$, and $\kappa_*$ for both KBs and GLBs. Finally, we present a tractable version \algnameshort, \algnameeffshort, which attains similar regret guarantees, and we discuss its time and space complexity.
\end{abstract}

\begin{keywords}
  Generalized linear bandits, kernelized bandits, self-normalized concentration
\end{keywords}

\section{Introduction}

Extending \emph{multi-armed bandits}~\citep[MABs,][]{lai1985asymptotically} to continuous decision spaces requires introducing a structure in the expected rewards. Without such a structure, information gathered from explored decisions cannot be transferred to unexplored ones, making learning infeasible~\citep{bubeck2011x}. The most known and studied structure is the \emph{linear} one, that led to the design of \emph{linear bandits}~\citep[LBs,][]{dani2008stochastic,abbasiyadkori2011improved}. In LBs, the expected reward is modeled as the scalar product between the decision, associated with a \emph{finite-dimensional} vector $\xs_t \in \mathbb{R}^d$, and an unknown parameter vector $\vtheta^* \in \mathbb{R}^d$, \ie $\mathbb{E}[y_t|\xs_t;\bm{\theta}^*] = \xs_t^\top\bm{\theta}^*$. 
This setting strictly generalizes the finite-arm MABs~\citep{lai1985asymptotically} that can be retrieved considering as decisions the $\mathbb{R}^d$ canonical basis.
LBs, in turn, have been extended in parallel in two directions: \emph{generalized linear bandits} \citep[GLBs,][]{filippi2010parametric} and \emph{kernelized bandits} \citep[KBs,][]{srinivas2010gaussian,chowdhury2017kernelized}.
On the one hand, GLBs employ a \emph{generalized linear model}~\citep[GLM,][]{mccullagh2019generalized} to represent different reward models belonging to the \emph{exponential family} \citep[EF,][]{barndorff2014information}, including Gaussian, Bernoulli, and Poisson.\footnote{For the specific case of Bernoulli rewards, we refer to them as \emph{logistic bandits} \citep{faury2020improved}.} 
This is achieved with the use of a non-linear \emph{inverse link function} $\mu(\cdot)$, such that the expected reward is defined as $\mathbb{E}[y_t|\xs_t;\bm{\theta}^*] = \mu ( \xs_t^\top\bm{\theta}^* )$.
On the other hand, KBs focus on the optimization of an unknown expected reward function $f^*$ belonging to a \emph{reproducing kernel Hilbert space} $\mathcal{H}$ \citep[RKHS,][]{smola1998learning} induced by a known kernel function $k(\xs,\xs')$, leading to the expected reward defined as $\mathbb{E}[y_t|\xs_t;f^*] = f^*(\xs_t)$. This setting allows associating each decision $\xs_t$ with a (possibly infinite-dimensional) feature map $\phi(\xs_t) = k(\cdot,\xs_t) \in \Hs$ which generalizes the finite-dimensional vector $\vtheta^*$ of (G)LBs. KBs have often been approached using Gaussian processes~\citep[GPs,][]{rasmussen2006gaussian} but studied with a subgaussian reward model only.
It is worth noting that GLBs fall back to LBs when the identity link function $\mu = I$ is considered, and KBs fall back to LBs when a linear kernel $k(\xs,\xs')= {\xs}^\top{\xs'}$ is considered.

\begin{wrapfigure}{r}{0.485\linewidth}
\vspace{-.95cm}
\begin{center}
\resizebox{.95\linewidth}{!}{
\tikzset{every picture/.style={line width=0.75pt}}
\begin{tikzpicture}[x=0.75pt,y=0.75pt,yscale=-1,xscale=1]

\draw  [draw opacity=0][fill={vibrantGrey},
fill opacity=0.3 ] (240.2,228.77) .. controls (240.2,179.94) and (300.64,140.35) .. (375.2,140.35) .. controls (449.76,140.35) and (510.2,179.94) .. (510.2,228.77) .. controls (510.2,277.61) and (449.76,317.2) .. (375.2,317.2) .. controls (300.64,317.2) and (240.2,277.61) .. (240.2,228.77) -- cycle ;
\draw  [draw opacity=0][fill={vibrantRed},
fill opacity=0.3 ] (330.4,210.17) .. controls (330.4,182.66) and (368.39,160.35) .. (415.26,160.35) .. controls (462.12,160.35) and (500.11,182.66) .. (500.11,210.17) .. controls (500.11,237.69) and (462.12,260) .. (415.26,260) .. controls (368.39,260) and (330.4,237.69) .. (330.4,210.17) -- cycle ;
\draw  [draw opacity=0][fill={vibrantBlue},
fill opacity=0.3 ] (250.11,210.32) .. controls (250.11,182.72) and (288.16,160.35) .. (335.1,160.35) .. controls (382.04,160.35) and (420.09,182.72) .. (420.09,210.32) .. controls (420.09,237.91) and (382.04,260.29) .. (335.1,260.29) .. controls (288.16,260.29) and (250.11,237.91) .. (250.11,210.32) -- cycle ;

\draw (365,174) node [anchor=north west][inner sep=0.75pt]   [align=left] {LB};
\draw (290,174) node [anchor=north west][inner sep=0.75pt]   [align=left] {GLB};
\draw (433,174) node [anchor=north west][inner sep=0.75pt]   [align=left] {KB};
\draw (327,266) node [anchor=north west][inner sep=0.75pt]   [align=left] {GKB \ \ $\mu (f^*(\xs_t))$};
\draw (355,190) node [anchor=north west][inner sep=0.75pt]   [align=left] {$\xs_t^\top{\bm{\theta}^*}$};
\draw (262,190) node [anchor=north west][inner sep=0.75pt]   [align=left] {$\mu(\xs_t^\top{\bm{\theta}^*})$};
\draw (435,190) node [anchor=north west][inner sep=0.75pt]   [align=left] {$f^*(\xs_t)$};

\draw (276,215) node [anchor=north west][inner sep=0.75pt] [align=left] {\blue{$d < \infty $}}; 
\draw (290,230) node [anchor=north west][inner sep=0.75pt] [align=left] {\red{$\Dot{\mu} \in \mathbb{R}_{\geq 0}$}};

\draw (357,215) node [anchor=north west][inner sep=0.75pt] [align=left] {\blue{$d < \infty $}}; 
\draw (360,230) node [anchor=north west][inner sep=0.75pt] [align=left] {\blue{$\Dot{\mu} = 1 $}};

\draw (430,215) node [anchor=north west][inner sep=0.75pt] [align=left] {\red{$d = \infty $}}; 
\draw (422,230) node [anchor=north west][inner sep=0.75pt] [align=left] {\blue{$\Dot{\mu} = 1 $}};

\draw (322,288) node [anchor=north west][inner sep=0.75pt] [align=left] {\red{$d = \infty $} \ \ \red{$\Dot{\mu} \in \mathbb{R}_{\geq 0}$}};

\end{tikzpicture}}
\end{center}
\vspace{-.645cm}
\caption{Graphical representation of the settings ($f^*(\cdot)$ belong to an RKHS).}
\label{fig:venn}
\vspace{-.6cm}
\end{wrapfigure}
In this work, we propose the novel \emph{generalized kernelized bandit} (GKB) setting, which unifies GLBs and KBs (Figure~\ref{fig:venn}). This setting enables learning where the unknown expected reward function $f^*$ comes from an RKHS (as in KBs) and the samples come from an exponential family model (as in GLBs) whose mean is obtained by applying an inverse link function $\mu$ to function $f^*$, i.e., $\mathbb{E}[y_t|\xs_t;f^*] = \mu(f^*(\xs_t))$. We aim to design regret minimization algorithms for GKBs that attain sensible regret guarantees and provide tight bounds for KBs and GLBs for the specific choices of the kernel function $k$ and inverse link function $\mu$ discussed above. Specifically, we seek regret bounds that simultaneously depend on the relevant characteristics of both the kernel function $k$, namely the maximal information gain $\gamma_T$ \citep{srinivas2010gaussian}, and of the inverse link function, namely an index of the non-linearity of $\mu$ evaluated at the optimal decision $\xs^*$, i.e., $\kappa_* = {\dot{\mu}(f^*(\xs^*))}^{-1}$ \citep{faury2020improved}. 
At an intuitive level, achieving this goal requires building on top of the techniques used in GLBs and KBs. Focusing on \emph{optimistic} algorithms, as established by a large portion of the literature~\citep{abbasiyadkori2011improved,chowdhury2017kernelized,faury2020improved,lee2024unified}, a fundamental technical tool is given by \emph{self-normalized} concentration inequalities for vector-valued martingales~\citep{delapena2004self}. Unfortunately, applying the inequalities designed for either KBs or GLBs to our GKB setting is ineffective for achieving the goal defined above. 
Attempting first with the seminal inequality of \citet{abbasiyadkori2011improved}, or even its extension to Hilbert spaces \citep{abbasi2013online}, would introduce an undesirable dependence on the \emph{maximum} value of the non-lineary index of $\mu$ over the decision set, $\kappa_{\Xs} = \sup_{\xs \in \Xs}{\dot{\mu}(f^*(\xs))}^{-1} \gg \kappa_*$. This occurs because such inequalities are designed for subgaussian reward models (i.e., \emph{Hoeffding-like}) and cannot adapt to the different variances induced by the non-linearity of $\mu$, a phenomenon also present in GLBs \citep{faury2020improved}. The issue persists even when using the inequality of \citet{chowdhury2017kernelized}, which directly operates in the kernel space but still restricts to the subgaussian case. Alternatively, attempting to use the inequalities designed for GLBs, such as that of \citet{faury2020improved}, which explicitly account for the EF reward model and, thus, for the non-linearity of $\mu$ that induces different variances across decisions (i.e., \emph{Bernstein-like}), would remove the dependence on the maximum of the non-linearity index $\kappa_{\Xs}$. Nevertheless, unlike \citep{abbasiyadkori2011improved,chowdhury2017kernelized}, which are \emph{dimension-free}, such inequalities would introduce an explicit dependence on the dimension $d$ of $\vtheta^*$ (i.e., \emph{dimension-dependent}), which prevents their application to GKBs, where the dimension of the RKHS may be infinite.
These observations suggest that a new \emph{Bernstein-like dimension-free} self-normalized concentration inequality is required, which will represent a technical contribution of independent interest of this work.

\paragraph{Outline and Contributions.\protect\footnote{A preliminary version of this work, containing results substantially similar to those presented here, was presented at the 18th European Workshop on Reinforcement Learning \citep{metelli2025a}.}}
The contribution of the paper are summarized as follows:
\begin{itemize}[noitemsep, leftmargin=14pt, topsep=0pt]
    \item In Section~\ref{sec:prob}, we introduce the novel setting of \emph{generalized kernelized bandits} (GKBs), the learning problem, and the assumptions. We then characterize it based on the non-linearity index $\kappa_{*}$ and maximum non-linearity index $\kappa_{\Xs}$ of the inverse link function $\mu$, and on the kernel $k$. We show how combining a function $f^*$ belonging to the RKHS with the kernel function $k$, with maximal information gain $\gamma_T$, induces a \emph{weighted kernel} $\widetilde{k}(\cdot,\cdot;f^*)$ with a corresponding weighted maximal information gain $\widetilde{\gamma}_T(f^*)$, and we establish their relation.
    \item In Section~\ref{sec:alg}, we present \algname\ (\algnameshort), an \emph{optimistic} regret minimization algorithm composed of two steps: maximum likelihood estimation and optimistic decision selection. Both steps require solving an optimization problem over a (subset of) the RKHS.
    \item In Section~\ref{sec:challenges}, we formally discuss the limitations of the existing self-normalized concentration inequalities designed for KBs and GLBs when used for the regret analysis of \algnameshort. We then develop a novel \emph{self-normalized Bernstein-like dimension-free} inequality for martingales taking values in a Hilbert (sub)space of functions with bounded norm. This result constitutes a contribution of independent interest.\footnote{Concurrently with the present work, the preprint \citep{bae2025neural} derived an inequality similar to ours using comparable techniques, but restricted to finite-dimensional spaces. Likewise, the concurrent preprint \citep{akhavan2025bernstein} addressed our proposed open problem \citep{mussi2024open}, obtaining a bound that achieves almost all the desired properties (see also Table \ref{tab:summary_concentration_properties}.}
    \item In Section~\ref{sec:regret}, we analyze \algnameshort using a confidence set defined via our inequality and show that it achieves a regret of order $\widetilde{O}(\gamma_T \sqrt{T/\kappa_*})$. Our result matches, up to logarithmic factors, the state-of-the-art bounds for both KBs \citep{chowdhury2017kernelized} and GLBs \citep{faury2020improved,lee2024unified} in their dependence on $T$, $\gamma_T$, and $\kappa_*$.\footnote{As we discuss in Section~\ref{sec:regret}, in the GLB setting, our result exhibits a worse dependence on other problem-dependent terms when compared to algorithms specifically designed for GLBs \citep{lee2024unified}.}
    \item In Section~\ref{sec:efficient}, we introduce the \algnameeff (\algnameeffshort) algorithm, a \emph{tractable implementation} of \algnameshort that replaces optimization in the RKHS with convex programs involving a finite number of real variables. We then study the time and space complexity of \algnameeffshort and provide a regret analysis showing that, with a modified confidence set, it achieves regret guarantees of the same order $\widetilde{O}(\gamma_T \sqrt{T/\kappa_*})$  as \algnameshort.
\end{itemize}
For the logistic case, i.e., when $\mu$ is the sigmoid function, the contributions of this paper provide an answer to the open problem we posed one year ago \citep{mussi2024open}.

\section{Preliminaries}\label{sec:prelim}

\paragraph{Notation.}
Let $a,b \in \Nat$ with $a \le b$, we introduce the symbols $\dsb{a,b} \coloneqq\{a,a+1,\dots,b\}$ and $\dsb{b} \coloneqq \dsb{1,b}$. Let $d \in \Nat$, $\mathbf{I}_d$ denotes the identity matrix of order $d$ and $\mathbf{0}_d$ the column vector of all zeros of size $d$ ($d$ is omitted when clear from the context). $\mathcal{N}(\bm{\mu}, \bm{\Sigma})$ denotes the multi-variate Gaussian distribution with mean $\bm{\mu}$ and covariance matrix $\bm{\Sigma}$.

\paragraph{Reproducing Kernel Hilbert Space.}
Let $\Xs \subseteq \Reals^d$ be a decision set and $\Hs$ be a Hilbert space 
endowed with the inner product $\inner{\cdot}{\cdot}$ (and induced norm $\|\cdot\|$). $\Hs$ is a \emph{reproducing kernel Hilbert space}~\citep{smola1998learning} if ($i$) there exists a function $k : \Xs \times \Xs \rightarrow \Reals$, called \emph{kernel}, satisfying the \emph{reproducing property}, i.e., for every $f \in \Hs$, it holds that $f(\xs) = \inner{f}{k(\xs,\cdot)}$ for every $\xs \in \Xs$ and ($ii$) kernel $k$ spans $\mathcal{H}$.\footnote{This means that $\mathcal{H} = \overline{\mathrm{span}(\{k(\xs,\cdot) : \xs\in\mathcal{X}\})}$, where $\overline{\mathcal{Y}}$ denotes the completion of set $\mathcal{Y}$.} It follows that the kernel $k$ is symmetric and positive semidefinite. We define $\phi(\xs) \coloneqq k(\xs,\cdot) \in \Hs$ for every $\xs \in \Xs$, so that for all $\xs, \xs' \in \Xs$, we have $k(\xs, \xs') = \inner{\phi(\xs)}{\phi(\xs')}$.
The function $\phi$ is often referred to as the \emph{feature map} induced by the kernel $k$. We denote by $I$ the identity operator on $\Hs$. Given $f,g \in \mathcal{H}$, we denote with $fg^\top : \mathcal{H} \rightarrow \mathcal{H}$ its tensor product, i.e., $fg^\top h = \inner{g}{h}f$ for every $h \in \Hs$. Let  $A : \mathcal{H} \rightarrow \mathcal{H}$  be a bounded self-adjoint positive semidefinite linear operator and $f \in \mathcal{H}$, we use the abbreviation $\|f\|_{A}^2 = \inner{f}{Af}$. Let $A : \mathcal{H} \rightarrow \mathcal{H}$ be a bounded linear trace-class operator, we denote with $\det(I+A)$ the Fredholm determinant of $I + A$ \citep{gohberg2012traces}.
We note that for every $\xs \in \Xs$, we have that $|f(\xs)|\le \|f\| \|k(\cdot,\xs)\| = \|f\| \sqrt{k(\xs,\xs)}$. We denote with $\mathrm{dim}(\Hs)$ the dimension of the Hilbert space $\Hs$, i.e., the cardinality of an orthonomal basis, that can be infinite. 

\paragraph{Information Gain.}
Let $k$ be a kernel, let $t \in \Nat$, and let $\xs_1,\dots,\xs_{t-1} \in \Xs$ be a sequence of decisions, we introduce the symbol $\Phi_t \coloneqq (\phi(\xs_1),\dots,\phi(\xs_{t-1}))^\top$. We define the \emph{Gram (or kernel) matrix} $\Ks_t : \Reals^{t-1}  \rightarrow \mathbb{R}^{t-1}$ as $\Ks_t =  (k(\xs_i,\xs_j))_{i,j \in \dsb{t-1}} = \Phi_t \Phi_t^\top$. The \emph{information gain} $\Gamma_t$ and the \emph{maximal information gain} $\gamma_t$ are defined, respectively as~\citep{srinivas2010gaussian}: $
    \Gamma_t \coloneqq \frac{1}{2} \log \det ( \mathbf{I} + \lambda^{-1} \Ks_t)$ and $\gamma_t \coloneqq \max_{\xs_1,\dots,\xs_{t-1} \in \Xs}\Gamma_t$, 
where $\lambda > 0$. It can be proved that $\Gamma_t$ is the \emph{mutual information} 
$I(\mathbf{f}_t ;\mathbf{y}_t )$
between the random vectors $\mathbf{f}_t \sim \mathcal{N}(\mathbf{0}, \nu^2\mathbf{K}_t)$ and $\mathbf{y}_t = \mathbf{f}_t + \bm{\epsilon}_t$ where $\bm{\epsilon}_t \sim \mathcal{N}(\mathbf{0}_t,v^2 \lambda \mathbf{I})$, for arbitrary $v > 0$ \citep{srinivas2010gaussian}. We use the abbreviation $\mathbf{K}_t(\lambda) \coloneqq  \Ks_t + \lambda \mathbf{I} $, so that, $ \Gamma_t \coloneqq \frac{1}{2} \log \det (\lambda^{-1} \Ks_t(\lambda))$.\footnote{Known bounds of $\gamma_t$ for commonly used kernels are available in~\citep{srinivas2010gaussian,VakiliKP21}.}

\paragraph{Covariance Operators.}
Let $\Hs$ be an RKHS with kernel $k$ inducing the feature map $\phi$, let $t \in \Nat$, and let  $\xs_1,\dots,\xs_{t-1} \in \Xs$ be a sequence of decisions, the \emph{covariance operator} $V_t : \Hs \rightarrow \Hs$ is defined as $V_t =  \sum_{s=1}^{t-1} \phi(\xs_s)\phi(\xs_s)^\top = \Phi_t^\top \Phi_t$. We also introduce the operator $
    V_t(\lambda) \coloneqq V_t + \lambda I$, where $\lambda > 0$.
If $\phi(\xs)$ has bounded norm for every $\xs \in \Xs$, then $V_t= \Phi_t^\top \Phi_t$ is a trace-class operator (actually, it has finite rank) for every $t \in \Nat$ and has the same eigenvalues of $\mathbf{K}_t=\Phi_t \Phi_t^\top$. Thus, the following identity between Fredholm and matrix determinants holds \citep{whitehouse2023sublinear}:
\begin{align}\label{eq:identity}
    \det(\lambda^{-1}V_t(\lambda)) = \det(\lambda^{-1}\mathbf{K}_t(\lambda)).
\end{align}

\paragraph{Canonical Exponential Family Models.}
Let $f : \Xs \rightarrow \Reals$, a real-valued random variable $y$ belongs to the \emph{canonical exponential family}~\citep[EF,][]{barndorff2014information} if it has density:
\begin{align}\label{eq:density}
    p(y|\xs;f) = \exp \left( \frac{y f(\xs) - m(f(\xs))}{\gtau} + h(y,\tau) \right), \qquad \forall \xs \in \Xs, \; \forall y \in \Reals,
\end{align}
where $\tau > 0$ is a temperature parameter and $\mathfrak{g},m : \Reals \rightarrow \Reals$ and $h : \Reals^2\rightarrow \Reals$ are suitably defined functions~\citep{lee2024unified}. This EF model allows representing a variety of distributions, including Gaussian, Bernoulli, exponentials, and Poisson. Function $m$ is called \emph{log-partition function} that, as customary~\citep{lee2024unified,sawarnigeneralized}, is assumed to be three times differentiable and convex.
We define the \emph{inverse link function} $\mu = m'$, that, since $m$ is convex, is monotonically non-decreasing. Thus, the following hold~\citep{lee2024unified}: $\E[y|\xs;f] = m'(f(\xs)) = \mu(f(\xs))$ and $\Var[y|\xs;f] = \gtau^{-1}\dot{\mu}(f(\xs))$.
When $f$ is a linear function, the model in Equation~\eqref{eq:density} is also called \emph{generalized linear model}~\citep[GLM,][]{mccullagh2019generalized}. We also define the maximum slope of $\mu$, i.e., $R_{\dot{\mu}} \coloneqq \sup_{f \in \Hs, \xs \in \Xs} \dot{\mu}(f(\xs))$.

\section{Generalized Kernelized Bandits}\label{sec:prob}

In this section, we propose the \emph{generalized kernelized bandits} (GKBs), introducing the problem formulation  (Section \ref{sec:gkb}) and the quantities characterizing the setting (Section \ref{sec:char}).

\subsection{Problem Formulation}\label{sec:gkb}

\paragraph{Setting.}
Let $f^* \in \Hs$ be an unknown function belonging to an RKHS $\Hs$. At every round $t \in \dsb{T}$, being $T \in \Nat$ the learning horizon, the learner chooses a decision $\xs_t \in \Xs$ by means of a policy $\pi_t : \mathcal{I}_{t} \rightarrow \Xs$, being $\mathcal{I}_{t} = (\xs_1,y_1,\dots,\xs_{t-1},y_{t-1}) \in (\Xs \times \Reals)^{t-1}$ the
history realized so far, and observes a reward $y_t \sim p(\cdot|\xs_t;f^*)$. The goal of the agent is to find a decision $\xs^* \in \Xs$ maximizing the expected reward, i.e., 
$
    \xs^* \in \argmax_{\xs \in \Xs} \mu(f^*(\xs)).
$
Since $\mu$ is monotonically non-decreasing, maximizing $\mu(f^*(\cdot))$ is equivalent to maximizing $f^*(\cdot)$. Notice that GKBs generalize two well-known settings: ($i$) \emph{generalized linear bandits}~\citep[GLBs,][]{li2017provably} when the kernel is linear $k(\xs,\xs') = \inner{\xs}{\xs'}$  with $\xs,\xs' \in \Xs$ and ($ii$) \emph{kernelized bandits}~\citep[KBs,][]{chowdhury2017kernelized} when the inverse link function is the identity, i.e., $\mu = I$.

\paragraph{Regret.}
A learner, i.e., a sequence of policies $\pi = (\pi_t)_{t \in \dsb{T}}$, is evaluated by its \emph{cumulative regret} 
$R(\pi,T) \coloneqq \sum_{t \in \dsb{T}} \left( \mu(f^*(\xs^*)) - \mu(f^*(\xs_t))\right)$, where $\xs_t = \pi_t(\mathcal{I}_{t})$ for every  $ t \in \dsb{T}$.

\paragraph{Assumptions.}
We introduce the following assumptions on function $f^*$ and kernel $k$.

\begin{asm}[Bounded Norm]\label{ass:boundedNorm} There exists a known  constant $B <+\infty$ such that $\|f^*\| \le B$.
\end{asm}

\begin{asm}[Bounded Kernel]\label{ass:boundedKernel}
There exists a known constant $K <+\infty$ such that $\sup_{\xs \in \Xs} k(\xs,\xs) \le K^2$.
\end{asm}

Assumption \ref{ass:boundedKernel} ensures that $\|\phi(\xs)\| = \sqrt{k(\xs,\xs)} \le K$ and, consequently, the covariance operator $V_t = \Phi_t^\top \Phi_t$ is trace-class.
The two assumptions allow bounding the $L_\infty$-norm of $f^*$ as $\|f^*\|_{\infty} \le \|f\| \sup_{\xs \in \Xs}\| \phi(\xs) \| \le BK$.
Furthermore, these assumptions are widely employed in the KB literature~\citep{chowdhury2017kernelized}, where, in particular, Assumption~\ref{ass:boundedKernel} is enforced with $K = 1$ and is fulfilled by commonly used kernels (e.g., Gaussian and Matérn kernels). In the GLBs, where  $k(\xs,\xs') = \inner{\xs}{\xs'}$ with $\xs,\xs' \in \Xs$, Assumption \ref{ass:boundedNorm} corresponds to requiring the boundedness of the parameter vector since $\|f^*\| = \| \vtheta^*\|_2$ \citep[e.g.,][Assumption 1]{faury2020improved}, whereas Assumption~\ref{ass:boundedKernel} corresponds to requiring the boundedness of the decision vectors since $k(\xs,\xs) = \|\xs\|^2_2$ \citep[e.g.,][Assumption 2]{faury2020improved}.
 
Concerning the EF reward model, we make the following assumptions.
\begin{asm}[Bounded noise]\label{asm:mAsm}
Let $\xs \in \Xs$, $y \sim p(\cdot|\xs;f^*)$, and let $\epsilon \coloneqq y - \mu(f^*(\xs))$ be the noise. There exists a known constant $R < + \infty$ such that $|\epsilon|\le R$ almost surely.
\end{asm}
This assumption is widely used in the GLB literature~\citep{abeille2021instance,sawarnigeneralized}. If, instead of bounded noise, we deal with $\nu^2$-subgaussian noise, we can take $R = \nu \sqrt{2 \log (2T/\delta)}$ to ensure that $|\epsilon_t| \le R$ uniformly for $t \in \dsb{T}$ w.p. $1-\delta$.\footnote{Indeed, for subgaussian noise, we have that $\Pr(\exists t \in \dsb{T} \,:\,|\epsilon_t| > \nu \sqrt{2 \log (2T/\delta)}) \le \sum_{t \in \dsb{T}} \Pr(|\epsilon_t| > \nu \sqrt{ 2\log (2T/\delta)}) \le  \delta $. This requires the knowledge of the learning horizon $T$ and will result in an additional logarithmic term in the final regret bound only.} Finally, we introduce the \emph{generalized self-concordance} property~\citep{russac2021self}.
\begin{asm}[(Generalized) Self-concordance]\label{asm:selfConc}
There exists a known constant $R_{s} <+\infty$ such that for every function $f \in \Hs$ and decision $\xs \in \Xs$, it holds that $|\ddot{\mu}(f(\xs))| \le R_{s}\dot{\mu}(f(\xs))$.
\end{asm}
\citet[][Lemma 2.1]{sawarnigeneralized} shows that, if the EF model generates random variables that are bounded by $|y| \le Y$ a.s., then Assumption~\ref{asm:selfConc} holds with $R_s = Y$. Moreover, it holds for Bernoulli rewards with $R_s = 1$ and Gaussian with $R_s = 0$~\citep{lee2024unified}.

\subsection{Problem Characterization}\label{sec:char}
\paragraph{Non-Linearity of the Inverse Link Function.}
We define the following quantities that characterize the difficulty of the learning problem w.r.t.~the non-linearity of the inverse link function $\mu$: $\kappa_{*} = \frac{1}{\dot{\mu}(f^*(\xs^*))}$ and $\kappa_{\Xs} = \sup_{\xs \in \Xs} \frac{1}{\dot{\mu}(f^*(\xs))}$.
Clearly, it holds that $\kappa_{*} \le \kappa_{\Xs} $.

\paragraph{Weighted Kernel and Weighted Information Gain.}
We now discuss how the combination of a function $f \in \Hs$ and an inverse link function $\mu$ induces a new RKHS that can be characterized by its \emph{weighted kernel}. Let $f \in \Hs$, we define the weighted feature map (now dependent on $f$) as $\widetilde{\phi}(\xs;f) \coloneqq \sqrt{{\dot{\mu}(f(\xs))}{\gtau^{-1}}}\phi(\xs)$, for every $ \xs \in \Xs$.  Let $t \in \Nat$ and let $\xs_1,\dots,\xs_{t-1} \in \Xs$ be a sequence of decisions, we introduce the symbol $\widetilde{\Phi}_t(f) = (\widetilde{\phi}(\xs_1;f),\dots,\widetilde{\phi}(\xs_{t-1};f))^\top$.
This allows introducing the \emph{weighted covariance operator} $\Hop(f) : \Hs \rightarrow \Hs$ as the covariance operator induced by the feature map $\widetilde{\phi}(\cdot;f)$, i.e., $\Hop(f) = \sum_{s=1}^{t-1}  \widetilde{\phi}(\xs_s;f)  \widetilde{\phi}(\xs_s;f)^\top = \widetilde{\Phi}_t(f)^\top \widetilde{\Phi}_t(f)$, which is trace-class under Assumption \ref{ass:boundedKernel} and since $\dot{\mu}(f(\xs)) \le R_{\dot{\mu}}$, and its regularized version as
$\Hop(\lambda;f) = \Hop(f) + \lambda I$. Then, we introduce the \emph{weighted kernel} $\widetilde{k}(\cdot,\cdot;f): \Xs \times \Xs \rightarrow \mathbb{R}$, defined as:
\begin{align}
    \widetilde{k}(\xs,\xs';f) \coloneqq \inner{\widetilde{\phi}(\xs;f)}{\widetilde{\phi}(\xs';f)} =  \sqrt{\frac{\dot{\mu}(f(\xs))}{\gtau}} k(\xs,\xs') \sqrt{\frac{\dot{\mu}(f(\xs'))}{\gtau}}, \quad \forall \xs,\xs' \in \Xs.
\end{align}
This is, in all regards, a valid kernel, as it is obtained by starting from a valid kernel $k$ and performing a legal transformation~\citep{smola1998learning}.
This way, the \emph{weighted Gram (or kernel) matrix} $\widetilde{\mathbf{K}}_t(f): \Reals^{t-1} \rightarrow \Reals^{t-1}$ is defined as  $\widetilde{\mathbf{K}}_t(f) = (\widetilde{k}(\xs_i,\xs_j; f))_{i,j \in \dsb{t-1}} = \widetilde{\Phi}_t(f) \widetilde{\Phi}_t(f)^\top$ and its regularized version is $\widetilde{{\mathbf{K}}}_t(\lambda;f) = \widetilde{{\mathbf{K}}}_t(f) + \lambda \mathbf{I}$. Using the identity in Equation~\eqref{eq:identity}, we deduce that $\det(\lambda^{-1}\Hop(\lambda;f)) = \det(\lambda^{-1}\widetilde{\mathbf{K}}_t(\lambda; f))$. We also define the \emph{weighted information gain} $\widetilde{\Gamma}_t(f)$ and the \emph{weighted maximal information gain} $\widetilde{\gamma}_t(f)$ as $\widetilde{\Gamma}_t(f) \coloneqq \frac{1}{2} \log \det (\lambda^{-1} \widetilde{\Ks}_t(\lambda;f))$ and $\widetilde{\gamma}_t(f) \coloneqq \max_{\xs_1,\dots,\xs_{t-1} \in \Xs} \widetilde{\Gamma}_t(f)$, respectively.
Finally, we consider the maximum value of the (maximal) information gain by varying the function $f$ in the RKHS $\Hs$, i.e., $\widetilde{\Gamma}_t(\Hs) = \sup_{f \in \Hs}\widetilde{\Gamma}_t(f)$ and $\widetilde{\gamma}_t(\Hs) = \sup_{f \in \Hs}\widetilde{\gamma}_t(f)$, respectively.\footnote{A notion similar to the information gain is the \emph{effective dimension}, defined in the context of kernelized reinforcement learning \citep{yang2020reinforcement}, kernelized contextual bandits \citep{valko2013finite}, and neural bandits \citep{zhou2020neural,zhang2021neural,bae2025neural}.} The following result relates the weighted and the unweighted (maximal) information gains.

\begin{restatable}{lemm}{lemmaGammaGamma}\label{lemma:lemmaGammaGamma}
    Let $\Hs$ be an RKHS with kernel $k$, $t \in \Nat$, and $\xs_1,\dots,\xs_{t-1} \in \Xs$ be a sequence of decisions. It holds that $ \widetilde{\Gamma}_t(\Hs) \le \max\{1,R_{\dot{\mu}}\gtau^{-1}\} \Gamma_t$ and $ \widetilde{\gamma}_t(\Hs) \le \max\{1,R_{\dot{\mu}}\gtau^{-1}\} \gamma_t$.
\end{restatable}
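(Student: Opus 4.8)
The plan is to establish the inequality for an arbitrary but fixed $f \in \Hs$ and then take the supremum, since the right-hand side $\max\{1,R_{\dot{\mu}}\gtau^{-1}\}\Gamma_t$ is independent of $f$. The structural fact I would start from is that weighting only rescales each rank-one contribution of the Gram matrix by the nonnegative factor $\gtau^{-1}\dot{\mu}(f(\xs_s))$; concretely, $\widetilde{\mathbf{K}}_t(f) = \gtau^{-1}\mathbf{D}\Ks_t\mathbf{D}$ with $\mathbf{D} = \mathrm{diag}\big(\sqrt{\dot{\mu}(f(\xs_s))}\big)_{s \in \dsb{t-1}}$, so that $\mathbf{D}^2 \preceq R_{\dot{\mu}}\mathbf{I}$ by definition of $R_{\dot{\mu}}$. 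Both information gains can then be written as sums over eigenvalues, $2\widetilde{\Gamma}_t(f) = \sum_i \log\!\big(1 + \lambda^{-1}\lambda_i(\widetilde{\mathbf{K}}_t(f))\big)$ and $2\Gamma_t = \sum_i \log\!\big(1 + \lambda^{-1}\lambda_i(\Ks_t)\big)$, using $\det(\lambda^{-1}\widetilde{\Ks}_t(\lambda;f)) = \det(\mathbf{I} + \lambda^{-1}\widetilde{\mathbf{K}}_t(f))$ and the analogous identity for $\Ks_t$.

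The crux is a per-eigenvalue domination. Writing $c \coloneqq R_{\dot{\mu}}\gtau^{-1}$, I would show
\begin{align*}
\lambda_i(\widetilde{\mathbf{K}}_t(f)) \le c\,\lambda_i(\Ks_t), \qquad i \in \dsb{t-1},
\end{align*}
with eigenvalues sorted decreasingly. The obstacle here is that $\mathbf{D}$ and $\Ks_t$ do not commute, so one cannot simply factor out $R_{\dot{\mu}}$; the clean resolution is to set $M = \Ks_t^{1/2}\mathbf{D}$ and observe that $M^\top M = \mathbf{D}\Ks_t\mathbf{D}$ and $MM^\top = \Ks_t^{1/2}\mathbf{D}^2\Ks_t^{1/2}$ share the same eigenvalues (for square $M$). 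Since $\mathbf{D}^2 \preceq R_{\dot{\mu}}\mathbf{I}$ gives $\Ks_t^{1/2}\mathbf{D}^2\Ks_t^{1/2} \preceq R_{\dot{\mu}}\Ks_t$ in the Loewner order, Weyl's eigenvalue monotonicity yields $\lambda_i(\mathbf{D}\Ks_t\mathbf{D}) \le R_{\dot{\mu}}\lambda_i(\Ks_t)$, and multiplying by $\gtau^{-1}$ gives the displayed bound. (Morally this is just the primal statement $\Hop(f) = \sum_{s}\gtau^{-1}\dot{\mu}(f(\xs_s))\phi(\xs_s)\phi(\xs_s)^\top \preceq c\,V_t$, since every weight is at most $c$; the $M^\top M$/$MM^\top$ trick is the finite-dimensional way to make this rigorous through the kernel matrices.)

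It then remains to combine the eigenvalue bound with the elementary scalar inequality, valid for every $a \ge 0$ and $c > 0$,
\begin{align*}
\log(1 + c a) \le \max\{1,c\}\,\log(1 + a),
\end{align*}
which for $c \ge 1$ is Bernoulli's inequality $1 + ca \le (1+a)^c$ and for $0 < c \le 1$ follows from monotonicity of $\log$ and $ca \le a$. Applying monotonicity of $\log$ to the eigenvalue bound gives $\log(1 + \lambda^{-1}\lambda_i(\widetilde{\mathbf{K}}_t(f))) \le \log(1 + c\lambda^{-1}\lambda_i(\Ks_t))$, and the scalar inequality with $a = \lambda^{-1}\lambda_i(\Ks_t)$ bounds the latter by $\max\{1,c\}\log(1 + \lambda^{-1}\lambda_i(\Ks_t))$. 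Summing over $i$ yields $\widetilde{\Gamma}_t(f) \le \max\{1,c\}\,\Gamma_t$, and taking the supremum over $f \in \Hs$ gives $\widetilde{\Gamma}_t(\Hs) \le \max\{1,R_{\dot{\mu}}\gtau^{-1}\}\Gamma_t$, as claimed.
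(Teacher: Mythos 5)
Your proposal is correct and takes essentially the same route as the paper, whose proof of this lemma is a direct application of Lemma~\ref{lemma:boundIG}: factor $\widetilde{\mathbf{K}}_t(f) = \gtau^{-1}\mathbf{D}\,\mathbf{K}_t\,\mathbf{D}$ through the diagonal weight matrix, dominate each eigenvalue as $\lambda_i(\widetilde{\mathbf{K}}_t(f)) \le R_{\dot{\mu}}\gtau^{-1}\lambda_i(\mathbf{K}_t)$, and conclude with the Bernoulli-type inequality $1+ca \le (1+a)^{\max\{1,c\}}$ summed over the spectrum before taking the supremum over $f$. The only cosmetic difference is in justifying the eigenvalue domination: the paper cites Horn's inequality for eigenvalues of products, whereas you use the $M^\top M$ versus $MM^\top$ conjugation together with Loewner-order (Weyl) monotonicity — both are valid and equivalent here.
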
%
Notice that the bound introduces just a dependence on the maximum slope of the inverse link function $R_{\dot{\mu}}$ and no dependence on the maximum non-linearity index $\kappa_{\Xs}$. This result will play a role in the derivation of the efficient implementation for \algnameshort (Section~\ref{sec:efficient}).

In the following, we aim to devise algorithms with a regret bound whose dominant term depends on $\kappa_{*}$ and $\widetilde{\gamma}_{T}$ (or $\gamma_T$) as quantities that characterize the complexity of the instance.

\begin{algorithm}[t]
\fbox{\small\parbox{.9\linewidth}{
        \KwIn{Decision set $\Xs$,  confidence level $\delta$,
        confidence sets $\Cset$}

        \For{$t \in \dsb{T}$}{

            $\displaystyle \fhat \in \argmin_{f \in \Hs} \eL(f) \;\; \text{(Equation~\ref{eq:loss})}$  \hfill \texttt{$\triangleright$ Maximum Likelihood Estimate}

            $\displaystyle  (\ftilde,\xs_t) \in \argmax_{f \in \mathcal{C}_t(\delta),\, \xs \in \Xs} \mu(f(\xs)) \;\; \text{(Equation~\ref{eq:optChoice})}$ \hfill \texttt{$\triangleright$ Optimistic Decision Selection} 

            Play $\xs_t$ and observe $y_t$
        }}}
        \caption{\algnameshort.}\label{alg:1}
\end{algorithm}

\section{\algnameshort: Algorithm}\label{sec:alg}
In this section, we present \algname (\algnameshort, Algorithm~\ref{alg:1}), a regret minimization optimistic algorithm for GKBs, comprising two steps: a \emph{maximum likelihood} (ML) estimate and an \emph{optimistic decision selection}.

\paragraph{Maximum Likelihood Estimate.}
At each round $t \in \dsb{T},$  we employ the samples collected so far $\mathcal{I}_t \in (\Xs \times \Reals)^{t-1}$, to obtain an estimate $\widehat{f}_t \in \Hs$ of the  unknown $f^* \in \Hs$. Starting from the EF model, we minimize the \emph{Ridge-regularized log-likelihood}, defined as:
\begin{align}\label{eq:loss}
    \eL(f) \coloneqq \sum_{s=1}^{t-1} \frac{-y_s f(\xs_s) + m(f(\xs_s))}{\gtau} + \frac{\lambda}{2} \| f \|^2, \quad \forall f \in \Hs, \, t \in \dsb{T},
\end{align}
with $\lambda > 0$. The ML estimate is denoted as $\fhat \in \argmin_{f \in \Hs} \eL(f)$. Since the unknown true function $f^*$ belongs to the RKHS $\mathcal{H}$, we can express it as $f^* (\xs)= \inner{f^*}{k(\xs,\cdot)} = \inner{f^*}{\phi(\xs)}$ for every $\xs \in \Xs$. Thus, observing that the loss function $\eL(f)$ is twice Fréchet differentiable w.r.t. $f$, we introduce the function $g_t(f) \in \mathcal{H}$ related to its gradient w.r.t. $f$ and the \emph{weighted covariance operator} $\Hop(\lambda;f) : \mathcal{H} \rightarrow \mathcal{H}$ corresponding to its Hessian w.r.t. $f$, computed as:
\begin{align}
    & g_t(f) \coloneqq \sum_{s =1}^{t-1} \frac{{\mu}(f(\xs_s))}{\gtau} \phi(\xs_s) + \lambda f, \qquad \nabla \eL(f) = - \sum_{s=1}^{t-1} \frac{y_s \phi(\xs_s)}{\gtau} + g_t(f), \\
    & \Hop(\lambda;f) \coloneqq \nabla^2 \eL(f) = \Hop(f) + \lambda I = \sum_{s=1}^{t-1} \frac{\dot{\mu}(f(\xs_s))}{\gtau} \phi(\xs_s) \phi(\xs_s)^\top + \lambda I.\label{eq:hessian}
\end{align}
The loss function $\eL$, function $g_t$, and the operator $\Hop$ (bounded under Assumption \ref{ass:boundedKernel}) reduce to the ones employed for GLBs under the assumption that the kernel $k$ is the linear one~\citep{faury2020improved,abeille2021instance,lee2024unified}. Furthermore, if $\mu = I$, we have that $ \Hop(\lambda;f) = V_t(\lambda)$, i.e., the covariance operator, as in \citep{chowdhury2017kernelized}. 

\paragraph{Optimistic Decision Selection.}
Then, the algorithm chooses an optimistic function $\ftilde \in \Hs$, belonging to a suitable confidence set $\mathcal{C}_t(\delta)$, and an optimistic decision $\xs_t \in \Xs$:
\begin{align}\label{eq:optChoice}
    (\ftilde,\xs_t) \in \argmax_{f \in \mathcal{C}_t(\delta),\, \xs \in \Xs} \mu(f(\xs)).
\end{align}
Since $\mu$ is non-decreasing, we can ignore $\mu$ in the maximization. We consider a confidence set, defined, for every round $t \in \dsb{T}$ and confidence $\delta \in (0,1)$, as follows:
\begin{align}\label{eq:cset}
    \mathcal{C}_t(\delta) = \Big\{ f \in \Hs\,:\, \big\| g_t(f) - g_t(\fhat) \big\|_{\Hop^{-1}(\lambda;f)} \le B_t(\delta; f) \Big\},
\end{align}
where the confidence radius $B_t(\delta; f)$ will be specified later to guarantee that  $f^*$ belongs to $\mathcal{C}_t(\delta)$ in high probability and  to control the regret. 

It is worth noting that both the steps of \algnameshort require solving optimization problems in the RKHS space $\Hs$. In Section~\ref{sec:efficient}, we provide a tractable implementation of \algnameshort.

\addtocounter{footnote}{-1}

\newcommand{\hoeffding}{\textcolor{orange}{{Hoeffding}}\@\xspace}
\newcommand{\bernstein}{\textcolor{teal}{{Bernstein}}\@\xspace}
\newcommand{\freedman}{\textcolor{red}{{Freedman}}\@\xspace}
\newcommand{\pacbayes}{\textcolor{violet}{{PAC-Bayes}}\@\xspace}
\newcommand{\pseudomax}{\textcolor{blue}{{Method of mixtures}}\@\xspace}
\begin{table}[t!]
\small
    \centering
    \renewcommand{\arraystretch}{1}
    \setlength{\tabcolsep}{8pt}
    \resizebox{\linewidth}{!}{
\begin{tabular}{l|l|ccccc|l}
    \toprule
    & & \multicolumn{5}{c|}{\textbf{Properties}} & \\
    \cmidrule(lr){3-7}
    \textbf{Self-normalized bounds}
    & \textbf{Condition}
    & \rotatebox{90}{\parbox{3cm}{{Designed for Hilbert spaces}}} 
    & \rotatebox{90}{\parbox{3cm}{Dimension-free bound}} 
    & \rotatebox{90}{{Empirical bound}} 
    & \rotatebox{90}{\parbox{3cm}{{Variance-weighted normalization}}} 
    & \rotatebox{90}{\parbox{3cm}{{Decision-dependent variance}}} 
    & \textbf{Technique} \\
    \midrule\midrule
        \citealp{dani2008stochastic} & \hoeffding & \no& \no &  \no & - & -  & \freedman \\
        \citealp{abbasiyadkori2011improved} & \hoeffding & \yes{${}^*$} & \yes & \yes & - & -  & \pseudomax \\
        \citealp{chowdhury2017kernelized} & \hoeffding & \yes{${}^\dagger$} & \yes & \yes & - & - & \pseudomax \\
        \citealp{faury2020improved} & \bernstein & \no & \no & \yes & \yes & \yes  &\pseudomax \\
        \citealp{zhou2021nearly} & \bernstein & \no & \no & \no & \no & \no  &\freedman \\
        \citealp{ZhaoHZZG23} & \bernstein & \no & \no & \no & \no & \yes  &\freedman \\
        \citealp{lee2024unified} & \bernstein & \no & \no & \no & \yes & \yes  & \pacbayes \\
        \citealp{ziemann2024vector} & \bernstein & \no & \no{${}^\ddagger$} & \yes & \no & \yes & \pacbayes \\
        \citealp{bae2025neural}${}^{\clubsuit}$ & {\bernstein} & \no{${}^{\S}$} & \yes & \yes & \yes & \yes & {\freedman} \\
        \citealp{akhavan2025bernstein}${}^{\clubsuit}$ & {\bernstein} & \yes & \yes & \yes & \yes & \no{${}^{\P}$} & {\pseudomax} \\
        \midrule
        \textbf{Our work} & \textbf{\bernstein} & \yes & \yes & \yes & \yes & \yes & \textbf{\freedman} \\
        \bottomrule
    \end{tabular}}

    \begin{center}
    \footnotesize
    ${}^{\clubsuit}$ Work concurrent with ours.\\
    ${}^*$ Corollary 3.5 of \cite{abbasi2013online} extends to separable Hilbert spaces.  ${}^{\dagger}$ Only for RKHS. ${}^{\ddagger}$ The dependence on the dimension is present but hidden inside the log-determinant (Equation 1.4). ${}^{\S}$ Although it considers finite-dimensional vector spaces, the proof technique can be adapted to Hilbert spaces. ${}^{\P}$ The bound depends on the log determinant of the (non-weighted) covariance operator $V_t(\lambda)$, although normalization is performed with the weighted covariance operator $\widetilde{V}_t(\lambda)$.
    \end{center}
    \caption{\vspace{-.4cm}Summary of the properties of self-normalized concentration inequalities. \quotes{Condition} refers to the assumption on the moment-generating function of the reward noise $\epsilon_t = y_t - \mu(f^*(\xs_t))$ used to derive the inequality; 
    \quotes{Designed for Hilbert spaces} indicates that the bound is natively designed to deal with martingales taking values in Hilbert spaces;
    \quotes{Dimension-free bound} indicates that the bound does not exhibit an explicit dependence on the dimension $\mathrm{dim}(\Hs)$ of the Hilbert/vector space $\mathcal{H}$; 
    \quotes{Empirical bound} means that the bound depends on the actual sequence of decisions $\xs_1, \dots, \xs_{t-1}$ (rather than on a worst-case one);
    \quotes{Variance-weighted normalization} denotes that normalization is performed via a weighted covariance operator $\widetilde{V}_t(\lambda;f)$ (rather than the unweighted one $V_t(\lambda)$); 
    \quotes{Decision-dependent variance bound} means that the bound allows for different decisions to have different variances and explicitly captures this (rather than assuming that all decisions share the same variance or are uniformly bounded by a single variance limit);\protect\footnotemark~
    \quotes{Technique} refers to the main technical tool employed in deriving the inequality.}
    \label{tab:summary_concentration_properties}
\end{table}

\section{\algnameshort: Challenges and New Technical Tools}\label{sec:challenges}
In this section, we discuss the challenges for deriving sensible regret guarantees for \algnameshort. We start by discussing the limitations of existing {self-normalized} inequalities (Section~\ref{sec:oldC}) and, then, we derive our novel Bernstein-like dimension-free inequality, a key contribution of this work (Section~\ref{sec:newC}). A comparison between existing inequalities is reported in Table \ref{tab:summary_concentration_properties}.

\subsection{Limitations of Existing Self-Normalized Concentration Inequalities}\label{sec:oldC}
To understand the need for a novel inequality, we need to anticipate some passages of the regret analysis. We recall that the confidence radius $B_t(\delta;f)$ should be designed to guarantee that: ($i$) the true unknown function $f^*$ belongs to $\Cset$ (Equation~\ref{eq:cset}) and ($ii$) the regret is as small as possible. For point ($i$), we can express the difference between the function $g_t(\cdot)$ evaluated in the true unknown function $f^*$ and in the ML estimate $\fhat$ as (see Lemma~\ref{lemma:optimism}):
\begin{align}
     g_t(f^*) - g_t(\fhat) =  \gtau^{-1}{\sum_{s=1}^{t-1} \epsilon_s \phi(\xs_s)} + \lambda f^*, 
\end{align}
where $\epsilon_s = y_s - \mu(f^*(\xs_s))$ is the noise.
Thus, since $f^*$ has bounded norm under Assumption~\ref{ass:boundedNorm}, to design $B_t(\delta;f)$, we need to control the martingale $S_t = \sum_{s=1}^{t-1} \epsilon_s \phi(\xs_s)$, taking values in $\Hs$. For point ($ii$), we need to bound the difference between optimistic function $\widetilde{f}_t$ and true unknown function $f^*$, both evaluated in the played decision $\xs_t$, i.e., $\widetilde{f}_t(\xs_t) - f^*(\xs_t)$ with the martingale $S_t$. This is done by applying the reproducing property, i.e.,  $f^*(\xs) = \inner{\phi(\xs)}{f^*}$ for every $\xs \in \Xs$, and then applying a Cauchy-Schwarz inequality with a \emph{specific} choice of operator $W_t(f^*) : \Hs \rightarrow \Hs$, possibly depending on $f^*$ itself:
\begin{align}
    \widetilde{f}_t(\xs_t) - f^*(\xs_t) = \inner{\widetilde{f}_t-f^*}{\phi(\xs_t)} \le \underbrace{\big\| \widetilde{f}_t-f^* \big\|_{W_t(f^*)}}_{\text{(A)}} \underbrace{\left\| \phi(\xs_t) \right\|_{W_t^{-1}(f^*)}}_{\text{(B)}}.
\end{align}
\footnotetext{Clearly, it only makes sense to discuss these last two properties for inequalities derived under the Bernstein condition. Moreover, normalization using the weighted covariance operator is the key to remove the dependence on the maximum non-linearity index $\kappa_{\Xs}$ of the inverse link function $\mu$.}
The choice of operator $W_t(f^*)$ has two effects: ($i$) by relating term (A) with the confidence set $\Cset$, it determines the multiplicative constant and the norm under which $S_t$ has to be controlled and ($ii$) it allows bounding (B) with an \emph{elliptic potential lemma} (see Lemma \ref{lemma:elliptic}). We now discuss two choices of operators $W_t(f^*)$ leading to different concentration bounds and, consequently, confidence sets, and discuss their advantages and disadvantages. 

\paragraph{Covariance Operator ($W_t(f^*) = V_t(\lambda)$).}
We start considering the case in which   $W_t(f^*) = V_t(\lambda)$ is the (unweighted) covariance operator.
In this case, we can link the term (A) with the confidence set as follows (see Lemma~\ref{lemma:relationAlphaG}):
\begin{align}\label{eq:bad}
    \text{(A)} = \| \widetilde{f}_t - f^* \|_{V_t(\lambda)} \le (1+2R_sBK) \max\{1,\gtau{\textcolor{red}{\kappa_{\Xs}}}\}  \big\| g_t(\widetilde{f}_t) - g_t(f^*) \big\|_{V_t^{-1}(\lambda)},
\end{align}
introducing an inconvenient multiplicative dependence on $ \max\{1,\gtau\kappa_{\Xs}\} $, i.e., on the maximum non-linearity index $\kappa_{\Xs}$ of the inverse link function $\mu$.
At this point, we have to control the martingale $S_t$ under the norm defined by the (unweighted) covariance operator $V_t^{-1}(\lambda)$ (i.e., {unweighted normalization}), as $ \big\|g_t(f^*) - g_t(\fhat)\big\|_{V_t^{-1}(\lambda)}  \le    \left\| S_t \right\|_{V_t^{-1}(\lambda)} + \frac{B}{\sqrt{\lambda}}$. The process $\left\| S_t \right\|_{V_t^{-1}(\lambda)}$ can be conveniently bounded by using a self-normalized inequality for subgaussian\footnote{We recall that since $|\epsilon_s|\le R$ a.s. for every $s \in \dsb{T}$, it is also $R^2$-subgaussian.} martingales (i.e., \emph{Hoeffding-like}), as in \citep{abbasiyadkori2011improved}:
 {\thinmuskip=2mu
\medmuskip=2mu
\thickmuskip=2mu
\begin{align}\label{eq:abbasi}
    \left\| S_t \right\|_{V_t^{-1}(\lambda)} \le R \sqrt{2 \log(\delta^{-1}) + \log \det (\lambda^{-1}V_t(\lambda))} = R \sqrt{2 \log(\delta^{-1}) + \log \det (\lambda^{-1}\mathbf{K}_t(\lambda))},
\end{align}}%
where the equality is obtained by Equation~\eqref{eq:identity}. It is worth noting that the second bound is also obtained by \citet{chowdhury2017kernelized}. The main advantage of these bounds is that they do not exhibit a dependence on the dimension $\mathrm{dim}(\mathcal{H})$ of the Hilbert space $\mathcal{H}$, which may be infinite in GKBs. Nevertheless, in this way, the dependence on the non-linearity index $\kappa_{\Xs}$ of the inverse link function (as in Equation~\ref{eq:bad}) becomes unavoidable in the regret. This suggests that we should prefer a different choice of operator $W_t(f^*)$.

\paragraph{Weighted Covariance Operator ($W_t(f^*) = \Hop(\lambda;f^*)$).}
The presence of $\kappa_{\Xs}$ depends on the covariance operator and emerges also in the GLBs when making the choice $W_t(f^*) = V_t(\lambda)$~\citep{faury2020improved,abeille2021instance}. The solution, in the GLB case, consists of choosing the weighted covariance operator $W_t(f^*) = \Hop(\lambda; f^*)$, where each tensor product $\phi(\xs_s)\phi(\xs_s)^\top$ is weighted by the variance $\gtau^{-1}{\dot{\mu}(f(\xs_s))}$ of the noise random variable $\epsilon_s$ (i.e., \emph{variance-weighted normalization}). This allows relating term (A) directly with the confidence set $\Cset$, avoiding the inconvenient dependence on $\kappa_{\Xs}$ (see Lemma~\ref{lemma:relationAlphaG} with $f'' = f$):
\begin{align}
    \text{(A)} = \| \widetilde{f}_t - f^* \|_{\Hop(\lambda; f^*)} \le (1+2R_sBK)  \big\| g_t(\widetilde{f}_t) - g_t(f^*) \big\|_{\Hop^{-1}(\lambda;f^*)}.
\end{align}
Now, to control the process $\left\| S_t \right\|_{\Hop^{-1}(\lambda;f^*)}$ and make effective use of the variances contained in the weighted covariance operator $\Hop(\lambda; f^*)$, we need to resort to a \emph{Bernstein-like} bound. A prototypical result in the GLB literature is the bound of \cite[][Theorem 1]{faury2020improved}:
\begin{align}\label{eq:faury}
    \left\| S_t \right\|_{\Hop^{-1}(\lambda; f^*)} \le \frac{\sqrt{\lambda}}{2} + \frac{2}{\sqrt{\lambda}} {\textcolor{red}{\mathrm{dim}(\mathcal{H})}} \log 2 + \frac{2}{\sqrt{\lambda}} \log (\delta^{-1}) + \frac{1}{\sqrt{\lambda}} \log \det (\lambda^{-1}\Hop(\lambda; f^*)),
\end{align}
where $\mathrm{dim}(\mathcal{H})$ is the dimension of the Hilbert space $\mathcal{H}$ that can be infinite for an RKHS, making the bound vacuous. Other works \citep[see, \eg][]{ziemann2024vector,whitehouse2024modern} propose similar Bernstein-like bounds, but the dependency on the dimension is still present.
%


\subsection{A Novel Bernstein-Like Dimension-Free Self-Normalized Inequality}\label{sec:newC}
From the above discussion, it should now be clear why we need a new concentration bound to control the self-normalized process \( \left\| S_t \right\|_{\Hop^{-1}(\lambda; f^*)} \). This bound should leverage the \emph{Bernstein condition} to account for the noise terms \(\epsilon_t\), which are characterized by different variances \(\gtau^{-1} \dot{\mu}(\xs_t)\), depending on the decision \(\xs_t\) (i.e., \emph{decision-dependent variance}), which are then used in the definition of the weighted covariance operator \(\Hop^{-1}(\lambda; f^*)\). This would allow to avoid the inconvenient multiplicative factor $\kappa_{\Xs}$. Furthermore, we look for a \emph{dimension-free} bound that does not depend on the dimension of the Hilbert (sub)space \(\mathcal{H}\) in which the self-normalizing operator \(\Hop(\lambda; f^*)\) is defined, since it may be infinite in the GKB setting. Finally, the resulting bound should be \emph{empirical}, i.e., computable from the sequence of decisions \(\xs_1, \dots, \xs_{t-1} \in \Xs \) made so far, as in Equations~\eqref{eq:abbasi} and~\eqref{eq:faury}.

We are now ready to state our new \emph{Bernstein-like dimension-free} self-normalized inequality. For the sake of generality, we express it for a subset $\mathcal{K}\subseteq \mathcal{H}$ of the Hilbert space made of function $\psi$ with norm bounded by $K$. We will then particularize the result for \algnameshort.
\begin{restatable}[Bernstein-Like Dimension-Free Self-Normalized Concentration]{thr}{selfNormalizedComplex}\label{thr:selfNormalizedComplex}
   Let $(\psi_t)_{t \ge 1}$ be a stochastic process valued in $\mathcal{K} \subseteq \mathcal{H}$, being $\mathcal{H}$ a Hilbert space, such that $\sup_{\psi \in \mathcal{K}} \|\psi\| \le K$ and predictable by the filtration $\mathcal{F}_{t-1}$ and let $(\epsilon_t)_{t \ge 1}$ be a real-valued stochastic process adapted to the $\mathcal{F}_t$ such that $\E[\epsilon_t|\mathcal{F}_{t-1}]=0$, $\Var[\epsilon_t|\mathcal{F}_{t-1}]=\sigma^2_t$, and $|\epsilon_t| \le R$ a.s. for every $t \ge 1$. Let:
    \begin{align}
        S_t \coloneqq \sum_{s=1}^{t-1} \epsilon_s \psi_s, \qquad \Hop(\lambda) \coloneqq \sum_{s=1}^{t-1} \sigma_s^2 \psi_s \psi_s^\top  + \lambda I.
    \end{align}
    Then, for every $\delta\!\in\!(0,1)$, with probability at least $1\!-\!\delta$ and simultaneously for all $t \!\ge\! 1$, it holds that:
    {\thinmuskip=2mu
\medmuskip=2mu
\thickmuskip=2mu
    \begin{align}
        \left\| S_t \right\|_{\Hop^{-1}(\lambda)} \le  \left( \sqrt{ 73 \log\det(\lambda^{-1}\Hop(\lambda)) } + \sqrt{3} \right)\sqrt{\log \frac{\pi^2({\rho_t}+1)^2}{3\delta}} +\frac{3RK}{ \sqrt{\lambda}} \log \frac{\pi^2({\rho_t}+1)^2}{3\delta},
    \end{align}}%
    where $ \rho_t = \max\left\{ 0 , \left\lceil \log \left(  \frac{8R^2K^2(t-1)^3}{\lambda}   \log \left(1 + \frac{K^2R^2}{\lambda} \right) \right)  \right\rceil\right\}$.
\end{restatable}
Although being of independent interest, we will apply Theorem \ref{thr:selfNormalizedComplex} with $\mathcal{K} = \{k(\cdot,\xs) =\phi(\xs) \,:\, \xs \in \Xs \} \subseteq \Hs$, $\psi_s = \phi(\xs_s)$, which satisfy $\sup_{\xs \in \Xs} \|\phi(\xs)\| \le K$, $\epsilon_s = y_s - \mu(f(\xs_s))$, and $\sigma^2_s = \gtau^{-1} \dot{\mu}(f(\xs_s))$ for every $s \ge 1$. Notice that no dependence on the dimension of the Hilbert space $\mathcal{H}$ (or of its subset $\mathcal{K}$) and no dependence on the minimum variance $\sigma_s^2$ (related to the maximum non-linearity index $\kappa_{\Xs}$) are present. Finally, the bound is empirical since it is computable from the sequence of functions $\psi_s = \phi(\xs_s)$ and the variances $\sigma_s^2$ only.

\paragraph{Related Inequalities.}
The most widely used approaches to derive self-normalized inequalities in the literature are: \emph{Freedman's inequality} \citep{freedman1975tail,dani2008stochastic,zhou2021nearly,ZhaoHZZG23,bae2025neural}, \emph{pseudomaximization} via the method of mixtures~\citep{delapena2004self,abbasiyadkori2011improved,faury2020improved,akhavan2025bernstein}, and \emph{PAC-Bayes}~\citep{lee2024unified,ziemann2024vector}.
In the last six months, concurrently with our work, two preprints addressing the problem of deriving dimension-free Bernstein-like concentration bounds have been made available online \citep{bae2025neural,akhavan2025bernstein}.
\citet[][Theorem 3.1]{bae2025neural} derive their result using Freedman's inequality as a key tool, as we do, although they focus on a specific choice of inverse link function $\mu$ (i.e., logistic) and on (finite-dimensional) vector spaces. However, we believe that their proof technique could be extended, with suitable adaptations, to general Hilbert spaces. Their result resembles our Theorem~\ref{thr:selfNormalizedComplex} but exhibits a worse dependence of order $O(\log t)$ compared to our $O(\log\log t)$, although their multiplicative constant ($8$) is slightly smaller than ours ($\sqrt{73}$) in the dominant term.
\citet[][Theorem 3.2]{akhavan2025bernstein} focus natively on separable Hilbert spaces and obtain their result using the pseudomaximization technique. However, their bound depends on the information gain $\Gamma_t = \frac{1}{2} \log \det (\lambda^{-1}V_t(\lambda))$ induced by the covariance operator, instead of on $\widetilde{\Gamma}_t(f) = \frac{1}{2} \log \det (\lambda^{-1}\widetilde{V}_t(\lambda;f))$ induced by the weighted covariance operator. Furthermore, since they consider noise $\epsilon_t$ bounded by $1$ in absolute value, their bound may hide additional dependencies on the range. Their main advantage is that the bound has a mild dependence on $t$ of the same order $O(\log\log t)$ as ours, enjoys a very tight multiplicative constant ($\sqrt{6}$) in the dominating term, and exhibits no dependence on the norm $K$ of the functions in the Hilbert space.

\section{\algnameshort: Regret Analysis}\label{sec:regret}
In this section, we provide the regret analysis of \algnameshort (Algorithm~\ref{alg:1}). We start showing that $f^*$ belongs to the confidence set $\Cset$ with high probability with a proper choice of the confidence radius $B_t(\delta;f)$ (Lemma~\ref{lemma:optimism}). Then, we move to the regret analysis (Theorem~\ref{thm:regret}).

\begin{restatable}[Good Event]{lemm}{goodEvent}\label{lemma:optimism}
Let $t \in \Nat$, $f \in \Hs$, $\lambda > 0$, and $\delta \in (0,1)$, let us define the confidence radius as:
{\thinmuskip=2mu
\medmuskip=2mu
\thickmuskip=2mu
\begin{equation}\label{eq:bValue}
   B_t(\delta;f) \coloneqq \sqrt{\lambda} B + \frac{1}{\gtau}\left(\sqrt{146 \widetilde{\Gamma}_t(f) } + \sqrt{3}\right) \! \sqrt{\log \frac{\pi^2({\rho_t}+1)^2}{3\delta}} +\frac{3RK}{ \gtau \sqrt{\lambda}} \log \frac{\pi^2({\rho_t}+1)^2}{3\delta},
\end{equation}}%
   where $ \rho_t = \max\left\{ 0 , \left\lceil \log \left(  \frac{8R^2K^2(t-1)^3}{\lambda}   \log \left(1 + \frac{K^2R^2}{\lambda} \right) \right)  \right\rceil\right\}$. Let $\mathcal{E}_\delta \coloneqq \{\forall t \ge 1 \,:\, f^* \in \mathcal{C}_t(\delta)\}$. Under Assumptions~\ref{ass:boundedNorm},~\ref{ass:boundedKernel}, and~\ref{asm:mAsm}, it holds that $\Pr(\mathcal{E}_\delta) \ge 1-\delta$.
\end{restatable}
Lemma~\ref{lemma:optimism} resorts to our self-normalized concentration inequality (Theorem~\ref{thr:selfNormalizedComplex}), together with Assumptions~\ref{ass:boundedNorm} and \ref{ass:boundedKernel}, to provide a form to the confidence radius $B_t(\delta;f)$. In particular, by exploiting the identity in Equation~\eqref{eq:identity}, we replace $ \frac{1}{2}\log\det(\lambda^{-1}\Hop(\lambda; f))$ with $\widetilde{\Gamma}_t(f)  =  \frac{1}{2} \log\det(\lambda^{-1}\widetilde{\mathbf{K}}_t(\lambda;f))$. It is worth noting that, differently from the majority of existing works~\citep{abbasiyadkori2011improved,abeille2021instance,lee2024unified}, $B_t(\delta;f)$ explicitly depends on function $f$ since the operator $\Hop(\lambda; f)$ necessitates $f$ to compute the weights $\gtau^{-1} \dot{\mu}(f(\xs_s))$.  Let us also define the worst-case version of the confidence radius w.r.t. the choice of function $f \in \Hs$, i.e., $B_t(\delta;\Hs) = \sup_{f \in \Hs} B_t(\delta;f)$, obtained by replacing $\widetilde{\Gamma}_t(f)$ with $\widetilde{\Gamma}_t(\Hs)$. Although \algnameshort makes use of the confidence radius $B_t(\delta;f)$, for analysis purposes, we also define a maximal confidence radius, where the information gain $\widetilde{\Gamma}_t(f)$ is replaced by its maximal version $\widetilde{\gamma}_t(f)$:
{\thinmuskip=1mu
\medmuskip=1mu
\thickmuskip=1mu
\begin{align*}
   \beta_t(\delta;f) &\coloneqq \max_{\xs_1,\dots,\xs_{t-1} \in \Xs} B(\delta;f) \\ &= \sqrt{\lambda} B + \frac{1}{\gtau}\left(\sqrt{146 \widetilde{\gamma}_t(f) } + \sqrt{3}\right)\sqrt{\log \frac{\pi^2({\rho_t}+1)^2}{3\delta}} +\frac{3RK}{\gtau \sqrt{\lambda}} \log \frac{\pi^2({\rho_t}+1)^2}{3\delta},
\end{align*}}%
and, finally, we introduce its worst-case version w.r.t. the choice of function $f \in \Hs$, i.e., $\beta_t(\delta;\Hs) = \sup_{f \in \Hs} \beta_t(\delta;f)$, obtained from $ \beta_t(\delta;f)$ by replacing $\widetilde{\gamma}_t(f)$ with $\widetilde{\gamma}_t(\Hs)$.

We are now ready to present the regret bound of \algnameshort.

\begin{restatable}[Regret Bound of \algnameshort]{thr}{regretBound}\label{thm:regret}
Under Assumptions~\ref{ass:boundedNorm}, \ref{ass:boundedKernel}, \ref{asm:mAsm}, and \ref{asm:selfConc}, \algnameshort with $B_t(\delta;f)$ defined in Equation \eqref{eq:bValue} and $\lambda > 0$, for every $\delta \in (0,1)$, with probability at least $1-\delta$, suffers regret $
    R(\text{\algnameshort},T) = R_{\textnormal{\text{perm}}}(T) + R_{\textnormal{\text{trans}}}(T)$, bounded as:
\begin{align}
    & R_{\textnormal{\text{perm}}}(T) \le 8  (1+2R_sB{K}){\beta}_T(\delta;\Hs) \sqrt{\max \left\{\gtau, \lambda^{-1} R_{\dot{\mu}} K^2\right\}\widetilde{\gamma}_T(f^*)} \sqrt{\frac{T}{\kappa_{*}}} , \\
    & R_{\textnormal{\text{trans}}}(T) \le  32  R_s (1+ R_{\dot{\mu}} \kappa_{\Xs})  (1+2R_sBK)^2{\beta}_T(\delta;\Hs)^2 \max \left\{\gtau, \lambda^{-1} R_{\dot{\mu}} K^2\right\} \widetilde{\gamma}_T(f^*).
\end{align}
\end{restatable}%
The proof schema of Theorem~\ref{thm:regret} follows similar steps to \citep{abeille2021instance} and the result, indeed, displays an analogous regret decomposition into a \emph{permanent} term $R_{\textnormal{\text{perm}}}(T)$ and a \emph{transient} term $R_{\textnormal{\text{trans}}}(T)$. Regarding the explicit dependence on $T$ and $\kappa_{*}$, $R_{\textnormal{\text{perm}}}(T)$ dominates and displays the desired dependence on $\sqrt{{T}/{\kappa_{*}}}$. $R_{\textnormal{\text{trans}}}(T)$ exhibits a dependence on the maximum non-linearity index $\kappa_{\Xs}$ and on the information gains product $\widetilde{\gamma}_T(\Hs)\widetilde{\gamma}_T(f^*)$, but has explicit dependence on $T$ which is only logarithmic and, thus, it can be considered negligible compared to $R_{\textnormal{\text{perm}}}(T)$. To highlight the dependence on the information gain(s) in $R_{\textnormal{\text{perm}}}(T)$, we explicit the form of the individual terms in the case $\lambda \ge \Omega(K^2)$, suppressing dependencies on $\gtau$ and $R_{\dot{\mu}}$, obtaining: ${\beta}_T(\delta;\Hs) = \widetilde{O}(\sqrt{\lambda} B + \sqrt{\widetilde{\gamma}_T(\Hs)\log(\delta^{-1})} + RK\log(\delta^{-1}) )$. Thus, we obtain a regret bound of order:
{\thinmuskip=2mu
\medmuskip=2mu
\thickmuskip=2mu
\begin{equation*}
    R(\text{\algnameshort},T) \le \widetilde{O} \left( \!(1+R_sBK) \! \left(\!\sqrt{\lambda} B + \sqrt{\widetilde{\gamma}_T(\Hs)\log(\delta^{-1})} + {RK} \log(\delta^{-1})\right) \! \sqrt{\widetilde{\gamma}_T(f^*)} \sqrt{\frac{T}{\kappa_{*}}} \right).
\end{equation*}}%
Two weighted maximal information gains, $\widetilde{\gamma}_T(\Hs)$ and $\widetilde{\gamma}_T(f^*)$ are present since the weighted kernel $\widetilde{k}(\cdot,\cdot;f)$ explicitly depends on the evaluated function $f$. Thanks to Lemma~\ref{lemma:lemmaGammaGamma}, we can bound both with the (unweighted) information gain: $\widetilde{\gamma}_T(f^*) \!\le \widetilde{\gamma}_T(\Hs) \! \le \max\{1,R_{\dot{\mu}}\gtau^{-1}\} \gamma_T$, at the mild price of a multiplicative term, leading to a simplified final rate of $ R(\text{\algnameshort},T) \le \widetilde{O}(\gamma_T \sqrt{T/\kappa_*})$ when ignoring dependencies on $\lambda$, $B$, $K$, $R_s$, and $\delta$ too.

\paragraph{Discussion for KBs and GLBs.}
In the case of KBs, we have to consider $\nu^2$-subgaussian noise and, thus, we set $R= O(\nu\sqrt{\log (T/\delta)})$. Furthermore, we have that $R_s=0$ and $\mu=I$ (consequently, $\dot{\mu} = 1$, $\kappa_*=1$, and $\widetilde{\gamma}_T(f^*) = \widetilde{\gamma}_T(\Hs) = \gamma_T$). This allows recovering the bound of order  $\widetilde{O}\left(\left(\sqrt{\lambda} B + \sqrt{{\gamma}_T\log(\delta^{-1})} + K \nu \log(\delta^{-1})^{3/2} \right)\sqrt{{\gamma}_T T}\right)$, matching the regret order of \citep{chowdhury2017kernelized} up to logarithmic terms. For GLBs, we can bound the information gain as follows~\citep[see Lemma 11 of][]{abeille2021instance}:
\begin{align}
    \widetilde{\gamma}_T(\Hs)\le  \max\{1,R_{\dot{\mu}}\gtau^{-1}\} \gamma_T \le  \max\{1,R_{\dot{\mu}}\gtau^{-1}\} d \log \left(\lambda + \frac{TK^2}{d}\right).
\end{align}
This leads to bound of order $\widetilde{O}((1+R_sBK)(\sqrt{\lambda} B + \sqrt{d\log(\delta^{-1})} + RK\log(\delta^{-1}) ) \sqrt{dT/{\kappa_{*}}})$, matching the result of \citep{faury2020improved,abeille2021instance}, up to logarithmic terms. However, for GLBs, it is known that the dependence on $(1+R_sBK)$ can be removed with techniques other than self-normalized bounds \citep{lee2024unified}. We leave the adaptation of those techniques to the GKB setting, which we believe is non-trivial, to future works.

\section{\algnameeffshort: Tractable Implementation}\label{sec:efficient}
\algnameshort (Algorithm~\ref{alg:1}) uses a confidence set $\Cset$ (Equation~\ref{eq:cset}) that requires evaluating the norm of a difference of functions $g_t(\cdot) \in \Hs$, which is computationally intractable. In this section, we show how to make both steps of \algnameshort computationally tractable, thereby obtaining its efficient variant \algnameeff (\algnameeffshort, Algorithm~\ref{alg:2}). We discuss its time and space complexity and provide the regret analysis, showing that \algnameeffshort enjoys regret guarantees similar to \algnameshort.

\noindent\begin{algorithm}[t]
\noindent\fbox{\small\parbox{.9\linewidth}{\KwIn{Decision set $\Xs$, confidence level $\delta$, confidence sets $\mathcal{D}_t(\delta)$}

        \For{$t \in \dsb{T}$}{

            $\displaystyle \widehat{\bm{\alpha}}_t = \argmin_{\bm{\alpha} \in \Reals^{t-1}} \overline{\mathcal{L}}_t( \bm{\alpha}) \; \text{(Equation~\ref{eq:loss_eff})}$ \hfill \texttt{$\triangleright$ Tractable Maximum Likelihood Estimate}

            $\displaystyle  \xs_t \in \argmax_{\xs \in \Xs} \max_{\bm{\alpha} \in \Reals^{t-1}} {\bm{\alpha}}^\top{\mathbf{k}_{t}({\xs})} \;\; \hfill \texttt{$\triangleright$ Tractable Optimistic Decision Selection} \\ 
            \phantom{AAA} \text{s.t.} \;\; \overline{\mathcal{L}}_t({\bm{\alpha}}) \le \overline{\mathcal{L}}_t({\widehat{\bm{\alpha}}_t})+  D_t(\delta) \quad \text{(Equation~\ref{eq:optEff})}$

            Play $\xs_t$ and observe $y_t$
        }}}
        \caption{\algnameeffshort.}\label{alg:2}
\end{algorithm}

\vspace{-.5cm}

\paragraph{Tractable Maximum Likelihood Estimation.}
Since function $m$ is convex, loss function $\eL(f)$ is convex in $f$ as well. However, optimizing on $f$ ranging in the RKHS $ \mathcal{H}$ is intractable. Nevertheless, thanks to the \emph{generalized representer theorem}~\citep[][Theorem 1]{scholkopf2001generalized}, at every round $t \in \dsb{T}$, we can restrict the optimization to the functions of the form $f(\cdot) = \sum_{s=1}^{t-1} \alpha_s k(\cdot,\xs_s) = \bm{\alpha}^\top{\mathbf{k}_t(\cdot)}$,
where $\bm{\alpha} = (\alpha_1,\dots,\alpha_{t-1})^\top$ and $\mathbf{k}_t(\cdot) = (k(\cdot,\xs_1),\dots,k(\cdot,\xs_{t-1}))^\top$. This allows framing the problem to the minimization of a convex function $\overline{\mathcal{L}}_t$ on a vector of $t-1$ real optimization variables $\bm{\alpha} \in \Reals^{t-1}$:
\begin{align}\label{eq:loss_eff}
    \overline{\mathcal{L}}_t(\bm{\alpha}) \coloneqq  \mathcal{L}_t(\bm{\alpha}^\top{\mathbf{k}_t(\cdot)}) = \sum_{s=1}^{t-1} \frac{-y_s \bm{\alpha}^\top{\mathbf{k}_t(\xs_s)} + m(\bm{\alpha}^\top{\mathbf{k}_t(\xs_s)})}{\gtau} + \frac{\lambda}{2} \bm{\alpha}^\top \mathbf{K}_t \bm{\alpha}.
\end{align}
Having fixed $\bm{\alpha} \in \Reals^{t-1}$, if the evaluation of the kernel function $k$ takes $O(1)$, the computation of the loss function $ \overline{\mathcal{L}}_t(\bm{\alpha})$ can be perfomed with $O(t^2)$ operations and with $O(t^2)$ memory to store matrix $\mathbf{K}_t$. Moreover, $ \overline{\mathcal{L}}_t$ is convex and smooth in the optimization variables $\bm{\alpha}$ with smoothness constant $\|\nabla^2 \overline{\mathcal{L}}_t(\bm{\alpha})\|_2 \le L_t \coloneqq K (t-1) \left(\gtau^{-1}R_{\dot{\mu}}(t-1) + \lambda\right) $. Thus, we may use standard convex optimizers (e.g., gradient descent), which converge at rate $O(1/\iota)$ for convex smooth functions, where $\iota$ is the number of updates \citep{bubeck2015convex}.

\paragraph{Tractable Optimistic Decision Selection.} To perform the choice of the optimistic decision, we propose a different (looser) confidence set based on the evaluation of the loss function only~\citep{abeille2021instance}, defined for every round $t \in \dsb{T}$ and confidence $\delta \in (0,1)$:
\begin{align}\label{eq:bonus2}
    \mathcal{D}_t(\delta) \coloneqq \left\{ f \in \Hs \,:\, \eL(f)-\eL(\fhat) \le D_t(\delta) \coloneqq (1+2R_sBK) B_t'(\delta) \right\}, \quad \ \text{where:}
\end{align}
{\thinmuskip=1mu
\medmuskip=1mu
\thickmuskip=1mu
\begin{align*}
    B_t'(\delta) \coloneqq \sqrt{\lambda} B + \frac{\sqrt{146 \max\{1,R_{\dot{\mu}}\gtau^{-1}\}{\Gamma}_t } + \sqrt{3}}{\gtau} \! \sqrt{\log \frac{\pi^2({\rho_t}+1)^2}{3\delta}} +\frac{3RK}{ \gtau \sqrt{\lambda}} \log \frac{\pi^2({\rho_t}+1)^2}{3\delta},
\end{align*}}%
Notice that $B_t'(\delta)$ is obtained from the original confidence radius $B_t'(\delta;f)$ by simply upper bounding $\widetilde{\Gamma}_t(f)$ thanks to Lemma \ref{lemma:boundIG} and, thus, removing any dependence on function $f$.\footnote{Retaining the dependence on $f$ in the confidence radius would bring a minimal advantage in the final regret bound with a significant computational burden in the solution of the optimistic decision selection.} We also introduce its maximal version, i.e., $\beta_t'(\delta) \coloneqq \max_{\xs_1,\dots,\xs_{t-1} \in \Xs} B_t'(\delta) $ which is obtained from $B_t'(\delta)$ by replacing the information gain ${\Gamma}_t$ with its maximal version $\gamma_T$. Clearly, it holds that $B_t'(\delta) \ge B_t(\delta;\Hs)$ and  $\beta_t'(\delta) \ge \beta_t(\delta;\Hs)$.
We prove in Lemma~\ref{lemma:cSetSecond} that this choice of the confidence radius ensures the inclusion property $\Cset \subseteq \mathcal{D}_t(\delta)$. Having fixed a decision $\widehat{\xs} \in \Xs$,\footnote{As customary in this literature~\citep{srinivas2010gaussian,chowdhury2017kernelized}, we do not address the issue of optimizing over the decision space $\Xs$ efficiently. This can surely be done efficiently when $\Xs$ is finite. When $\Xs$ is continuous, we can resort to a \emph{discretization} based on the regularity properties of the kernel function, with a controllable effect on the final regret performances~\citep{rando2022ada}.} the optimistic decision selection can be formulated, thanks to the \emph{generalized representer theorem}~\citep{scholkopf2001generalized} as the following constrained convex program:\footnote{Even if formulated for unconstrained minimization, the representer theorem admits cost functions that take $+\infty$ as value~\citep{scholkopf2001generalized}. Thus, we can convert constrained minimization to unconstrained by bringing the constraint into the objective function and setting it to $+\infty$ if the constraint is violated.}
\begin{equation}\label{eq:optEff}
\begin{aligned}
     \min_{\bm{\alpha} \in \Reals^{t-1}} \; -\bm{\alpha}^\top{\mathbf{k}_{t}(\widehat{\xs})}  \qquad \text{subject to} \qquad \overline{\mathcal{L}}_t(\bm{\alpha}) \le  \overline{\mathcal{L}}_t(\widehat{\bm{\alpha}}_t)+ D_t(\delta),
\end{aligned}
\end{equation}
where $\widehat{\bm{\alpha}}_t \in \argmin_{\bm{\alpha} \in \Reals^{t-1}} \overline{\mathcal{L}}_t(\bm{\alpha}) $ are the ML parameters computed in the previous step.
Having fixed $\widehat{\xs} \in \Xs$ and $\bm{\alpha} \in \Reals^{t-1}$, if the evaluation of the kernel function $k$ takes $O(1)$, the computation of the objective and of the constraint take $O(t)$ and $O(t^2)$ operations, respectively, with $O(t^2)$ memory to store $\mathbf{K}_t$. The program has $t-1$ real variables, a linear objective function and one convex constraint, being $\overline{\mathcal{L}}_t$ convex in $\bm{\alpha}$ with the same smoothness constant $L_t$ computed before. Thus, similarly as above, we may use standard constrained convex optimizers (e.g., projected gradient descent), which converge at rate $O(1/\iota)$, where $\iota$ is the number of updates \citep{bubeck2015convex}. Furthermore, the information gain $\Gamma_t$, needed to compute $D_t(\delta)$, can be updated incrementally as $\Gamma_{t+1} = \Gamma_{t} + \log( 1 + \lambda^{-1} (k(\xs_{t}, \xs_{t}) - \mathbf{k}_t(\xs_t)^\top \mathbf{K}_{t}^{-1}(\lambda) \mathbf{k}_t(\xs_{t})))$  and the inverse matrix $ \mathbf{K}_{t}^{-1}(\lambda)$ can be updated incrementally as well with an overall cost of $O(t^2)$ operations \citep{chowdhury2017kernelized}.

We now show that the choice of the new confidence set $\mathcal{D}_t(\delta)$ does not degrade the regret in the dependence on the relevant quantities compared to using $\Cset$.
\begin{restatable}[Regret Bound of \algnameeffshort]{thr}{regretBoundSecond}\label{thm:regret2}
Under Assumptions~\ref{ass:boundedNorm}, \ref{ass:boundedKernel}, \ref{asm:mAsm}, and \ref{asm:selfConc}, \algnameshort with $D_t(\delta)$ defined in Equation \eqref{eq:bonus2} and $\lambda > 0$, for every $\delta \in (0,1)$, with probability at least $1-\delta$, suffers regret: $
    R(\text{\algnameeffshort},T) = R_{\textnormal{\text{perm}}}(T) + R_{\textnormal{\text{trans}}}(T)$, bounded as:
\begin{align*}
    \resizebox{.97\textwidth}{!}{$\displaystyle 
    \begin{aligned}
    &R_{\textnormal{\text{perm}}}(T) \! \le \! 4 \sqrt{\max \left\{\gtau, \lambda^{-1} R_{\dot{\mu}}K^2\right\}} (2+2R_sBK)  \sqrt{{\beta}_T'(\delta)} \! \Big( \!  \sqrt{{\beta}_T'(\delta)} \! + \!  2 \! \Big) \!  \sqrt{\widetilde{\gamma}_T(f^*)} \sqrt{ \frac{T}{\kappa_{*}}},\\
    &R_{\textnormal{\text{trans}}}(T) \! \le \! 8  R_s (1\! +\! R_{\dot{\mu}} \kappa_{\Xs}) \max \left\{\gtau, \lambda^{-1} \! R_{\dot{\mu}}K^2\right\} \! (2\!+\!2R_sBK)^2 {{\beta}_T'(\delta)\! }\Big(\!\sqrt{\!{\beta}_T'(\delta)} \! + \! 2 \Big)^2 \!\! \widetilde{\gamma}_T(f^*) .     
    \end{aligned}$}
\end{align*}
\end{restatable} 

The bounds in Theorems~\ref{thm:regret} and~\ref{thm:regret2} have the same order dependence on the relevant quantities, but Theorem~\ref{thm:regret2} replaces $\beta_T(\delta;\Hs)$ with the larger $\beta_T'(\delta)$ and has a multiplicative constant for $R_{\textnormal{\text{perm}}}(T)$ (resp. $R_{\textnormal{\text{trans}}}(T)$) that is roughly 3 (resp. 9) times larger.

\section{Conclusions} 
In this paper, we have introduced the novel setting of GKBs, unifying KBs and GLBs. We have provided a novel Bernstein-like dimension-free self-normalized concentration inequality of independent interest. We employed it to analyze the regret of \algnameshort showing tight regret bounds.  Finally, we proposed an efficient version, \algnameeffshort, which preserves the regret guarantees. 
Future works include investigating the use of the techniques from \cite{lee2024unified} to remove the multiplicative dependence on the norm and kernel bounds $(1+R_sBK)$ in the regret bound and the study of the inherent complexity of regret minimization in the GLB setting by conceiving regret lower bounds~\citep{scarlett2017lower}.


\appendix 


\section{A Data-Driven Freedman's Inequality}\label{apx:proofs}
In this appendix, we derive a data-driven Freedman's inequality and compare it with the related literature. We start by revising the standard version of Freedman's inequality.

\begin{lemm}[Freedman's Inequality]\label{lemma:freedman}
    Let $(z_t)_{t \ge 1}$ be a real-valued martingale difference sequence adapted to the filtration $\mathcal{F}_t$ such that $z_t \le R$ a.s.~for all $t \ge 1$. Then, for every $\lambda \in (0,3/R)$ it holds that with probability at least $1-\delta$:
    \begin{align}
        \forall t \ge 1: \qquad \sum_{s=1}^t z_s \le  \frac{\lambda}{2(1-\lambda R/3)} \sum_{s=1}^t \E[z_s^2|\mathcal{F}_{s-1}] + \frac{\log (\delta^{-1})}{\lambda}.
    \end{align}
    This implies that for every $\nu > 0$, with probability at least $1-\delta$:
    \begin{align}
     \forall t \ge 1: \qquad \sum_{s=1}^t z_s \le  \nu \sqrt{2 \log (\delta^{-1})} + \frac{R \log (\delta^{-1})}{3} \quad \text{or } \quad \sum_{s=1}^t \E[z_s^2|\mathcal{F}_{s-1}] > \nu^2.
    \end{align}
\end{lemm}
    
\begin{proof}
    Refer to \citep[][Theorem~13.6]{zhang2023mathematical}.
\end{proof}
We are now ready to state our result.
\begin{restatable}[A data-driven Freedman's inequality]{thr}{newFreedman}\label{thm:newFreedman}
    Let $(z_t)_{t \ge 1}$ be a real-valued martingale difference sequence adapted to the filtration $\mathcal{F}_t$ such that $z_t \le R$ a.s. for all $t \ge 1$. Let $(v_t)_{t \ge 1}$ be a real-valued process predictable by the filtration $\mathcal{F}_t$ such that for every $t \ge 1$, we have that $\sum_{s=1}^t \E[z_s^2|\mathcal{F}_{s-1}] \le v_t$ a.s.. Then, for every $\eta > 1$ and $v_0>0$, with probability at least $1-\delta$, it holds that:
    \begin{align}
         \forall t \ge 1: \qquad \sum_{s=1}^t z_s \le  \sqrt{2 \max\left\{v_0, \eta v_t \right\}\log \frac{\pi^2(\widehat{\ell}_t+1)^2}{6\delta}} + \frac{R}{3} \log \frac{\pi^2(\widehat{\ell}_t+1)^2}{6\delta},
    \end{align}
    where $\widehat{\ell}_t = \max\left\{0, \left\lceil \log_{\eta} ({v_t}/{v_0}) \right\rceil \right\}$.
\end{restatable}

    {\thinmuskip=2mu
\medmuskip=2mu
\thickmuskip=2mu
\begin{proof}
    The proof makes use of classical Freedman's inequality \citep{freedman1975tail} combined with a \emph{stitching} argument~\citep{howard2021time}.
    We start from the version of Freedman's inequality of Lemma~\ref{lemma:freedman} taken from \citep{zhang2023mathematical}:
    \begin{align}
        \Pr \left( \exists t \ge 1:\, \sum_{s=1}^t z_s >  \nu \sqrt{2 \log (\delta^{-1})} + \frac{R \log (\delta^{-1})}{3} ,\; \sum_{s=1}^t \E[z_s^2|\mathcal{F}_{s-1}] \le \nu^2 \right) \le \delta.
    \end{align}
    Since $v_t \ge \sum_{s=1}^t \E[z_s^2|\mathcal{F}_{s-1}]$ a.s. for every $t \ge 1$, it immediately follows that:
    \begin{align}\label{eq:ineqInter}
    \Pr & \left( \exists t \ge 1:\, \sum_{s=1}^t z_s >  \nu \sqrt{2 \log (\delta^{-1})} + \frac{R \log (\delta^{-1})}{3} ,\;  v_t \le \nu^2 \right) \le \delta.
    \end{align}
    We now proceed by performing a stitching argument with a geometric grid over the values of $\nu \ge 0$ defined as $\{\eta^\ell v_0 \,:\, \ell \in \mathbb{N}\}$ for any choice of $\eta >1$ and $v_0>0$. Thus, we have:
    \begin{align}
        \Pr & \left( \exists \ell \in \mathbb{N},\; \exists t \ge 1 \,:\, \sum_{s=1}^t z_s >   \sqrt{2 \eta^\ell v_0  \log \frac{\pi^2(\ell+1)^2}{6\delta}} + \frac{R}{3}  \log \frac{\pi^2(\ell+1)^2}{6\delta} ,\,  v_t \le \eta^\ell v_0 \right) \\
        & \le  \sum_{\ell \in \mathbb{N}}  \Pr \left( \exists t \ge 1 \,:\, \sum_{s=1}^t z_s >   \sqrt{2 \eta^\ell v_0  \log \frac{\pi^2(\ell+1)^2}{6\delta}} + \frac{R}{3}  \log \frac{\pi^2(\ell+1)^2}{6\delta} ,\,  v_t \le \eta^\ell v_0 \right)\label{line:0001} \\
        & \le \sum_{\ell \in \mathbb{N}}  \frac{6\delta}{\pi^2(\ell+1)^2} \le \delta,\label{line:0002}
    \end{align}
    where line~\eqref{line:0001} follows from a union bound, line~\eqref{line:0002} is an application of Equation~\eqref{eq:ineqInter} with $\nu =\eta^{\ell}v_0$ and by observing that $\sum_{\ell \in \mathbb{N}}   \frac{1}{(\ell+1)^2}  = \frac{\pi^2}{6}$. Let $\widehat{\ell}_t \in \Nat$ be the smallest value such that $v_t \le \eta^{\ell} v_0$. We have $ \widehat{\ell}_t = \min\left\{\ell \in\Nat\,:\, v_t \le \eta^{\ell} v_0\right\} = \max\left\{0, \left\lceil \log_{\eta} \frac{v_t}{v_0} \right\rceil \right\}$, for which it holds that:
    \begin{align}\label{eq:etaEll}
        \eta^{\widehat{\ell}_t} v_0 \le \eta^{\max\left\{0, \left\lceil \log_{\eta} \frac{v_t}{v_0}\right\rceil \right\}} v_0  \le \eta^{\max\left\{0,  \log_{\eta} \frac{v_t}{v_0} + 1 \right\}} v_0  \le \max\left\{v_0, \eta v_t\right\}.
    \end{align}
    Finally, we prove the inequality:
    \begin{align}
    \Pr & \left( \exists t \ge 1 \,:\, \sum_{s=1}^t z_s >  \sqrt{2 \max\left\{v_0, \eta v_t \right\}\log \frac{\pi^2(\widehat{\ell}_t+1)^2}{6\delta}} + \frac{R}{3} \log \frac{\pi^2(\widehat{\ell}_t+1)^2}{6\delta} \right) \\
    & \le  \Pr  \left( \exists t \ge 1 \,:\, \sum_{s=1}^t z_s >   \sqrt{2 \eta^{\widehat{\ell}_t} v_0  \log \frac{\pi^2(\widehat{\ell}_t+1)^2}{6\delta}} + \frac{R }{3}  \log \frac{\pi^2(\widehat{\ell}_t+1)^2}{6\delta}\;, v_t \le \eta^{\widehat{\ell}_t} v_0 \right) \label{line:0003} \\
    & \le  \Pr \left( \exists \ell \in \mathbb{N} , \; \exists t \ge 1 \,:\, \sum_{s=1}^t z_s >   \sqrt{2 \eta^\ell v_0  \log \frac{\pi^2(\ell+1)^2}{6\delta}} + \frac{R}{3}  \log \frac{\pi^2(\ell+1)^2}{6\delta} \;,  v_t \le \eta^\ell v_0 \right) \label{line:0004} \\
    &\le \delta, \nonumber
    \end{align}
    where line~\eqref{line:0003} follows from Equation~\eqref{eq:etaEll} and line~\eqref{line:0004} from line~\eqref{line:0002}.
\end{proof}}

Theorem~\ref{thm:newFreedman}, compared to the standard Freedman's inequality (see Lemma~\ref{lemma:freedman}), allows obtaining a bound that depends on the predictable process $v_t$ that we can think to as a proxy (upper bound) of the variance that, however, does not need to be deterministic. From a technical perspective, Theorem~\ref{thm:newFreedman} is obtained using a \emph{stitching} argument~\citep{howard2021time} that brings two beneficial effects. First, it allows to accurately perform \emph{union bounds} considering the values that the predictable process can take over a geometric grid $\{\eta^\ell v_0 \,:\, \ell \in\Nat\}$ enabling the use of the data-driven quantity $v_t$, where the parameters $\eta >1$ and $v_0>0$ can be selected to tighten the bound. Second, it allows replacing a $O(\log t)$ term in the bound with a $O(\log\log t)$ at the price of a larger multiplicative constant $\eta >1$. 

\paragraph{Related Inequalities.}
Several variations of Freedman's inequality have been proposed in the literature, which differ mainly in two dimensions: ($i$) the dependence on the number of random variables $t$ in the bound, and ($ii$) the knowledge of a bound $R$ on the random variables $z_s$ holding almost surely. 
\citet[][Theorem 12]{GaillardSE14} and \citet[][Lemma 3]{rakhlin2012making} assume bounded random variables and exhibit a dependence on $t$ of order $O(\log\log t)$ in the dominating term, similar to ours. 
However, our result allows tuning $\eta$ and $v_0$ to tighten the bound, leading to an improvement in the multiplicative constants. 
\citet[][Theorem 6]{li2024q}, instead, provide a bound with no explicit dependence on $t$, but where the bound depends on a parameter $K$ that appears inside the logarithm and replaces the variance-related term $V_t$ with $\max\{V_t, \sigma^2 / 2^K\}$, where $\sigma^2$ is a constant such that $V_t \le \sigma^2$ almost surely. 
In particular, by choosing $K = O(\log t)$, we recover the order $O(\log\log t)$ in the dominating term. 
Differently, \citet[][Theorem 9]{zimmert2022return} and \citet[][Theorem 2.2]{lee2020bias} provide bounds that scale with a quantity related to the empirical maximum of the involved random variables, $\max\{1, \max_{s \in \dsb{t}} z_s\}$, ultimately leading to a dependence of order $O(\log t)$. 
Notice that, because of the presence of the $\max\{1, \cdot\}$ term, when the random variables are bounded by a constant $R < 1$, the bound still scales with $1$.


\section{Proofs}

\subsection{Proofs of Section~\ref{sec:prob}}
\lemmaGammaGamma*
\begin{proof}
A direct application of Lemma~\ref{lemma:boundIG}.
\end{proof}

\subsection{Proofs of Section~\ref{sec:challenges}}

\selfNormalizedComplex*
\begin{proof}
    The proof follows similar steps as \citep{dani2008stochastic,zhou2021nearly}, using Theorem~\ref{thm:newFreedman} as base inequality, recalling that we are dealing with functions $\psi$ of a Hilbert space $\mathcal{H}$ rather than vectors. For the sake of this derivation, we will suppress the dependence on $\lambda$, simply writing $\Hop(\lambda)= \Hop$.
    Let us introduce the notation $Z_t \coloneqq \left\| S_t \right\|_{\Hop^{-1}}$, $w_t \coloneqq \|\psi_t\|_{\Hop^{-1}}$, and $\widetilde{w}_t \coloneqq \sigma_t \|\psi_t\|_{\Hop^{-1}}$. 
    
    First of all, let us recall that for every $t \in \Nat$, we have that $\widetilde{V}_t$ is a bounded self-adjoint positive definite linear operator on $\mathcal{H}$, for every $\lambda > 0$ and, thus, invertible.
    Recalling that $\Hop = \Hop[t-1] + \sigma_{t-1} \psi_{t-1} \psi_{t-1}^\top$, from the Sherman-Morris formula in Hilbert spaces~\citep{woodbury1950inverting,li2022sherman}, we have:
    \begin{align}
    \Hop^{-1} & = \Hop[t-1]^{-1} - \frac{\Hop[t-1]^{-1} \psi_{t-1}\psi_{t-1}^\top \Hop[t-1]^{-1} \sigma_{t-1}^2}{1 + \|\psi_{t-1}\|_{\Hop[t-1]^{-1}}^2 \sigma_{t-1}^2} = \Hop[t-1]^{-1} - \frac{\Hop[t-1]^{-1} \psi_{t-1}\psi_{t-1}^\top \Hop[t-1]^{-1} \sigma_{t-1}^2}{1 + \widetilde{w}_{t-1}^2}.
    \end{align}
    Let us decompose $Z_t$:
    \begin{align}
        Z_t^2 & \coloneqq \left\| S_t \right\|_{\Hop^{-1}}^2 = S_t^\top \Hop^{-1} S_t \\
        & = (S_{t-1} + \epsilon_{t-1} \psi_{t-1})^\top \Hop^{-1}(S_{t-1} + \epsilon_{t-1} \psi_{t-1}) \\
        & = S_{t-1}\Hop^{-1}S_{t-1} + 2\epsilon_{t-1} \psi_{t-1}^\top \Hop^{-1}S_{t-1} + \epsilon_{t-1}^2 \psi_{t-1}^\top   \Hop^{-1} \psi_{t-1} \\
        & \le S_{t-1}\Hop[t-1]^{-1}S_{t-1} + \underbrace{ 2\epsilon_{t-1} \psi_{t-1}^\top \Hop^{-1}S_{t-1}}_{ \text{(A)}} + \underbrace{\epsilon_{t-1}^2 \psi_{t-1}^\top   \Hop^{-1} \psi_{t-1}}_{ \text{(B)}},
    \end{align}
    having exploited the fact that $\Hop \succeq \Hop[t-1]$. We analyze terms (A) and (B) separately.
    
    \paragraph{Analysis of Term (A).}
    From the Sherman-Morris formula in Hilbert spaces, we have:
    \begin{align}
        2 \epsilon_{t-1} \psi_{t-1}^\top \Hop^{-1} S_{t-1}&= 2 \epsilon_{t-1} \Bigg( \psi_{t-1}^{\top} \Hop[t-1]^{-1}S_{t-1} - \frac{\psi_{t-1}^{\top} \Hop[t-1]^{-1} \psi_{t-1}\psi_{t-1}^\top \Hop[t-1]^{-1}S_{t-1} \sigma_{t-1}^2}{1 + \widetilde{w}_{t-1}^2} \Bigg) \\
        & = 2 \epsilon_{t-1} \left( \psi_{t-1}^{\top} \Hop[t-1]^{-1}S_{t-1} - \frac{\widetilde{w}^2_{t-1}}{1 + \widetilde{w}_{t-1}^2} \psi_{t-1}^\top \Hop[t-1]^{-1}S_{t-1}  \right) \\
        & = 2 \epsilon_{t-1} \frac{\psi_{t-1}^\top \Hop[t-1]^{-1}S_{t-1}}{1 + \widetilde{w}_{t-1}^2} \eqqcolon \ell_{t}.
    \end{align}
    Consider now the event $\mathcal{E}_{t} = \mathds{1}\{0 \le s \le t \,:\, Z_s \le \beta_t\}$, being $\beta_t$ a non-negative non-decreasing predictable process, whose expression will be defined later. Furthermore, let us define $\widetilde{\beta}_t = \min\left\{ \beta_{t}, \frac{(t-1)RK}{\sqrt{\lambda}}\right\}$ which is non-decreasing as well. Under event $\mathcal{E}_{t}$, we know that $Z_s \le \widetilde{\beta}_t$ thanks to Lemma~\ref{lemma:tech}. Under $\mathcal{E}_{t}$, we bound the maximum value and the variance of $ \ell_{t}$. Let us start with the maximum value:
    \begin{align}
        \ell_{t} \mathcal{E}_{t} \le | \ell_{t} \mathcal{E}_{t}| & \le \left|2 \epsilon_{t-1} \frac{\psi_{t-1}^\top \Hop[t-1]^{-1}S_{t-1}}{1 + \widetilde{w}_{t-1}^2} \mathcal{E}_{t} \right| \\
        & \le \frac{2 R}{1 + \widetilde{w}_{t-1}^2} \|\psi_{t-1}\|_{\Hop[t-1]^{-1}}  \|S_{t-1}\|_{\Hop[t-1]^{-1}}\mathcal{E}_{t} \label{line:-1000}\\
        & \le \frac{2 R \|\psi_{t-1}\|_{\lambda^{-1} I} \beta_{t-1}  }{1 + {\widetilde{w}_{t-1}^2}}  \label{line:-2000}\\
        & \le \frac{2 R K}{\sqrt{\lambda}} \beta_t,\label{line:-3000}
    \end{align}
    where line~\eqref{line:-1000} follows from the application of Cauchy-Schwarz inequality and recalling that $|\epsilon_{t-1}|\le R$ a.s., line~\eqref{line:-2000} is obtained by observing that $\Hop[t-1] \succeq \lambda I$ and by exploiting event $\mathcal{E}_{t}$, and line~\eqref{line:-3000} comes from the bound on  $\|\psi_{t-1}\| \le K$ and the monotonicity of $\beta_t$.
    Let us move to the variance, recalling that $\ell_t$ is zero mean, i.e., $\E[\ell_t|\mathcal{F}_{t-1}]=0$:
    \begin{align}
        \E\left[ \ell_{t}^2 | \mathcal{F}_{t-1} \right] & = \E\left[ \left( 2 \epsilon_{t-1} \frac{\psi_{t-1}^\top \Hop[t-1]^{-1}S_{t-1}}{1 + \widetilde{w}_{t-1}^2} \right)^2  \mathcal{E}_{t} \middle| \mathcal{F}_{t-1} \right] \\
        & \le   \frac{4\sigma_{t-1}^2\|\psi_{t-1}\|_{\Hop[t-1]^{-1}}^2 \|S_{t-1}\|_{\Hop[t-1]^{-1}}^2}{(1+\widetilde{w}_{t-1}^2)^2}  \mathcal{E}_{t} \label{line:-3500} \\
        & \le  \left( \frac{2\widetilde{w}_{t-1}}{1+\widetilde{w}_{t-1}^2}\right)^2 \widetilde{\beta}_{t-1}^2  \le  \min\{1,2\widetilde{w}_{t-1}\}^2 \widetilde{\beta}_{t-1}^2, \label{line:-4000}
    \end{align}
    where line~\eqref{line:-3500} follows from Cauchy-Schwarz inequality and recalling that $\E[\epsilon_{t-1}|\mathcal{F}_{t-1}]= \sigma^2_{t-1}$, line~\eqref{line:-4000} follows from the inequality $\frac{2x}{1+x^2} \le \min\{1,2x\}$ for $x \ge 0$. Summing over $\dsb{1, t-1}$, we obtain:
    \begin{align}
        \sum_{s=1}^{t-1} \E\left[ \ell_{s+1}^2 | \mathcal{F}_{s-1} \right] & \le \sum_{s=1}^{t-1} \min\{1,2\widetilde{w}_{s}\}^2 \widetilde{\beta}_{s}^2  \le 4 \widetilde{\beta}_t^2 \sum_{s=1}^{t-1}  \min\{1,\widetilde{w}_{s}\}^2,
    \end{align}
    where we bounded $\widetilde{\beta}_{s} \le \widetilde{\beta}_t$ and $ \min\{1,2\widetilde{w}_{s}\}^2 \le 4  \min\{1,\widetilde{w}_{s}\}^2$.
    From the elliptical potential lemma (Lemma~\ref{lemma:elliptic} with $M=1$), we obtain:
    \begin{align}
        \sum_{s=1}^{t-1}  \min\{1,\widetilde{w}_{s}\}^2 \le  2 \log \det(\lambda^{-1}\Hop),
    \end{align}
    where $\det (\cdot)$ is the Fredholm determinant.
    By Theorem~\ref{thm:newFreedman}, setting $\eta = e$, $v_0 = 1$, $v_t =  8 {\beta}_t^2 \log \det (\lambda^{-1} \Hop)$, we have that with probability at least $1-\delta$, simultaneously for all $t \ge 1$:
    \begin{align}
        \sum_{s=1}^t \ell_{s} \le  \sqrt{2 \max\left\{1, 8e \beta_t^2 \log \det (\lambda^{-1} \Hop) \right\}\log \frac{\pi^2(\widehat{\rho}_t+1)^2}{6\delta}} + \frac{2RK}{3\sqrt{\lambda}} \beta_t \log \frac{\pi^2(\widehat{\rho}_t+1)^2}{6\delta},
    \end{align}
    with $\widehat{\rho}_t = \max\left\{ 0 , \left\lceil \log \left( 8 \frac{(t-1)^2R^2K^2}{\lambda} \log \det (\lambda^{-1} \Hop) \right)  \right\rceil\right\}$,
    having bounded $ \widetilde{\beta}_t \le \beta_t$ in the inequality and $ \widetilde{\beta}_t \le \frac{(t-1)RK}{\sqrt{\lambda}}$ in the expression of $\widehat{\rho}_t$.
    
    \paragraph{Analysis of Term (B).}
    We proceed again with the Sherman-Morris formula in Hilbert spaces:
    \begin{align}
        \epsilon_{t-1}^2 \psi_{t-1}^\top   \Hop^{-1} \psi_{t-1} & = \epsilon_{t-1}^2 \Bigg(  \psi_{t-1}^\top   \Hop[t-1]^{-1} \psi_{t-1}  - \frac{\psi_{t-1}^\top   \Hop[t-1]^{-1} \psi_{t-1} \psi_{t-1}^\top   \Hop[t-1]^{-1} \psi_{t-1} \sigma^2_{t-1}}{1+\widetilde{w}_{t-1}^2} \Bigg) \\
        & ={\frac{ \epsilon_{t-1}^2 \|\psi_{t-1}\|_{\Hop[t-1]^{-1}}^2} {1+\widetilde{w}_{t-1}^2}}.
    \end{align}
    Let us define: $\ell_{t} \coloneqq \frac{ \epsilon_{t-1}^2 \|\psi_{t-1}\|_{\Hop[t-1]^{-1}}^2} {1+\widetilde{w}_{t-1}^2} - \E\left[  \frac{ \epsilon_{t-1}^2 \|\psi_{t-1}\|_{\Hop[t-1]^{-1}}^2} {1+\widetilde{w}_{t-1}^2} |\mathcal{F}_{t-1} \right]$ and let us start bounding the maximum value:
    \begin{align}\label{eq:boundInfNorm1}
        \ell_{t} \le  \frac{ \epsilon_{t-1}^2 \|\psi_{t-1}\|_{\Hop[t-1]^{-1}}^2} {1+\widetilde{w}_{t-1}^2}  \le \frac{R^2K^2}{\lambda},
    \end{align}
    where we bounded $\|\psi_{t-1}\|_{\Hop[t-1]^{-1}}^2 \le \|\psi_{t-1}\|_{\lambda^{-1}I}^2 \le \frac{K^2}{\lambda}$. 
    
    Concerning the variance, we have:
    \begin{align}
        \Var[\ell_t | \mathcal{F}_{t-1} ] & = \Var\left[ \frac{ \epsilon_{t-1}^2 \|\psi_{t-1}\|_{\Hop[t-1]^{-1}}^2} {1+\widetilde{w}_{t-1}^2}  \middle| \mathcal{F}_{t-1} \right] \le \E\left[ \left( \frac{ \epsilon_{t-1}^2 \|\psi_{t-1}\|_{\Hop[t-1]^{-1}}^2} {1+\widetilde{w}_{t-1}^2} \right)^2 \middle| \mathcal{F}_{t-1} \right] \\
        & \le \frac{R^2K^2}{\lambda} \E\left[  \frac{ \epsilon_{t-1}^2 \|\psi_{t-1}\|_{\Hop[t-1]^{-1}}^2} {1+\widetilde{w}_{t-1}^2}  \middle| \mathcal{F}_{t-1} \right]\label{line:n-1} = \frac{R^2K^2}{\lambda}   \frac{ \sigma_{t-1}^2 \|\psi_{t-1}\|_{\Hop[t-1]^{-1}}^2} {1+\widetilde{w}_{t-1}^2} \\
        & = \frac{R^2K^2}{\lambda}   \frac{ \widetilde{w}_{t-1}^2} {1+\widetilde{w}_{t-1}^2}  \le \frac{R^2K^2}{\lambda}   \min\{1,\widetilde{w}_{t-1}\}^2, \label{line:-6000}
    \end{align}
    where line~\eqref{line:n-1} derives from applying Equation~\eqref{eq:boundInfNorm1}, line~\eqref{line:-6000} follows from the inequality $\frac{x}{1+x} \le \min\{1,x\}$. Summing and applying the elliptic potential lemma (Lemma~\ref{lemma:elliptic} with $M=1$), we have:
    \begin{align}
        \sum_{s=1}^{t-1} \Var[\ell_{s+1} | \mathcal{F}_{s-1} ] \le \frac{R^2K^2}{\lambda}  \sum_{s=1}^{t-1}   \min\{1,\widetilde{w}_{s}\}^2 \le \frac{2R^2K^2}{\lambda}  \log \det(\lambda^{-1}\Hop).
    \end{align}
    Furthermore, following the same steps from Equation \eqref{line:n-1}, we obtain:
    \begin{align}
        \sum_{s=1}^{t-1} \E[  \ell_{s+1}|\mathcal{F}_{s-1}] = \sum_{s=1}^{t-1} \E\left[ \frac{ \epsilon_{s}^2 \|\psi_{s}\|_{\Hop[t-1]^{-1}}^2} {1+\widetilde{w}_{s}^2} \middle| \mathcal{F}_{s-1}  \right] \le 2 \log \det(\lambda^{-1}\Hop).
    \end{align}
    We now apply Theorem~\ref{thm:newFreedman}, setting $\eta = e$, $v_0 = 1$, $v_t =  \frac{2R^2K^2}{\lambda}  \log \det(\lambda^{-1}\Hop)$, we have that with probability at least $1-\delta$, simultaneously for all $t \ge 1$:
    \begin{align}
       & \sum_{s=1}^{t-1}  \frac{ \epsilon_{s}^2 \|\psi_{s}\|_{\Hop[s]^{-1}}^2} {1+\widetilde{w}_{s}^2}  \le  2 \log \det(\lambda^{-1}\Hop)\\
        &  \; + \sqrt{2 \max\left\{1, \frac{2e R^2K^2}{\lambda}  \log \det(\lambda^{-1}\Hop) \right\}\log \frac{\pi^2(\widetilde{\rho}_t+1)^2}{6\delta}} + \frac{R^2K^2}{3{\lambda}} \log \frac{\pi^2(\widetilde{\rho}_t+1)^2}{6\delta},
    \end{align}
    with $\widetilde{\rho}_t = \max\left\{ 0 , \left\lceil \log \left(  \frac{2R^2K^2}{\lambda}  \log \det(\lambda^{-1}\Hop) \right)  \right\rceil\right\}$.
    
    \paragraph{Putting All Together.}
    Since $\widehat{\rho}_t \ge \widetilde{\rho}_t$ and $\log\det(\lambda^{-1}\Hop) \le (t-1) \log \left(1 + \frac{R^2K^2}{\lambda} \right)$ from Lemma~\ref{lemma:tech}, we define $\rho_t \coloneqq \max\left\{ 0 , \left\lceil \log \left(  \frac{8R^2K^2(t-1)^3}{\lambda}   \log \left(1 + \frac{K^2R^2}{\lambda} \right) \right)  \right\rceil\right\}$.  Putting together the two bounds, we have to find $\beta_t$ in order to satisfy the following condition:
    {\thinmuskip=2mu
    \medmuskip=2mu
    \thickmuskip=2mu
    \begin{align}
        \text{(A)} + \text{(B)} \leq & \sqrt{2 \max\left\{1, 8e \beta_t^2 \log \det (\lambda^{-1} \Hop) \right\}\log \frac{\pi^2({\rho}_t+1)^2}{6\delta}} + \frac{2RK}{3\sqrt{\lambda}} \beta_t \log \frac{\pi^2({\rho}_t+1)^2}{6\delta}  \\
        & + 2 \log \det(\lambda^{-1}\Hop) + \sqrt{2 \max\left\{1, \frac{2eR^2K^2}{\lambda}  \log \det(\lambda^{-1}\Hop) \right\}\log \frac{\pi^2({\rho}_t+1)^2}{6\delta}}\\
        & + \frac{R^2K^2}{3{\lambda}} \log \frac{\pi^2({\rho}_t+1)^2}{6\delta} \le \beta_t^2.
    \end{align}
    We proceed by bounding the maxima in the left-hand-side as $\max\{a,b\} \le a+b$ for $a,b \ge 0$ and using the subadditivity of the square root to get a stricter condition:
    \begin{align}
        &    \sqrt{2\log \frac{\pi^2({\rho}_t+1)^2}{6\delta}} + \sqrt{ 16e \beta_t^2 \log \det (\lambda^{-1} \Hop) \log \frac{\pi^2({\rho}_t+1)^2}{6\delta}} + \frac{2RK}{3\sqrt{\lambda}} \beta_t \log \frac{\pi^2({\rho}_t+1)^2}{6\delta}  \\
        & \quad +2 \log \det(\lambda^{-1}\Hop) +   \sqrt{2\log \frac{\pi^2({\rho}_t+1)^2}{6\delta}} + \sqrt{ \frac{4eR^2K^2}{\lambda}  \log \det(\lambda^{-1}\Hop) \log \frac{\pi^2({\rho}_t+1)^2}{6\delta}}\\
        & \quad + \frac{R^2K^2}{3{\lambda}} \log \frac{\pi^2({\rho}_t+1)^2}{6\delta} \le \beta_t^2.
    \end{align}}%
    This is a second-degree inequality in the variable $\beta_t$ and, thus, we have to find the minimum value of $\beta_t$ fulfilling such an inequality. Using the polynomial inequality of Proposition 7 of \citep{abeille2021instance} (i.e., $x^2 \le bx+c=0 \implies x \le b + \sqrt{c}$ when $b, c \ge 0$), we have:
    \begin{align}
        \beta_t & \le \sqrt{ 16e  \log \det (\lambda^{-1} \Hop) \log \frac{\pi^2({\rho}_t+1)^2}{6\delta}} +\frac{2RK}{3\sqrt{\lambda}} \log \frac{\pi^2({\rho}_t+1)^2}{6\delta} \\
        & \quad + \Bigg( 2\sqrt{2\log \frac{\pi^2({\rho}_t+1)^2}{6\delta}} + 2 \log \det(\lambda^{-1}\Hop) \\
        & \quad + \sqrt{ \frac{4eR^2K^2}{\lambda}  \log \det(\lambda^{-1}\Hop) \log \frac{\pi^2({\rho}_t+1)^2}{6\delta}} +  \frac{R^2K^2}{3{\lambda}} \log \frac{\pi^2({\rho}_t+1)^2}{6\delta} \Bigg)^{\frac{1}{2}} \\
        & \le  \left( (\sqrt{ 16e } + \sqrt{2})  \sqrt{\log \det (\lambda^{-1} \Hop)} + \sqrt{2\sqrt{2}} \right) \sqrt{ \log \frac{\pi^2({\rho}_t+1)^2}{6\delta}} \\
        & \quad + \left( \frac{2}{3} + \frac{1}{\sqrt{3}} \right) \frac{RK}{\sqrt{\lambda}} \log \frac{\pi^2({\rho}_t+1)^2}{6\delta} \label{line:n001} \\
        & \quad +\left( \frac{4eR^2K^2}{\lambda}  \log \det(\lambda^{-1}\Hop) \log \frac{\pi^2({\rho}_t+1)^2}{6\delta}\right)^{\frac{1}{4}}\\
         & \le  \left( \left(\sqrt{ 16e } + \sqrt{2}+ \frac{1}{2}\right)  \sqrt{\log \det (\lambda^{-1} \Hop)} + \sqrt{2\sqrt{2}} \right) \sqrt{ \log \frac{\pi^2({\rho}_t+1)^2}{6\delta}} \\
        & \quad + \left( \frac{2}{3} + \frac{1}{\sqrt{3}} + \sqrt{e} \right) \frac{RK}{\sqrt{\lambda}} \log \frac{\pi^2({\rho}_t+1)^2}{6\delta}, \label{line:n002}
    \end{align}
    where line~\eqref{line:n001} follows from the subadditivity of the square root and recalling that $\log \frac{6({\rho}_t+1)^2}{\pi^2\delta} \ge 1$ for $t \ge 1$, to get line~\eqref{line:n002}, we apply Young's inequality for products as $ab \le a^2/2 + b^2/2$ for $a,b \ge 0$ to get:
    \begin{equation}\resizebox{.92\textwidth}{!}{$\displaystyle
          \left(\frac{4eR^2K^2}{\lambda}  \log \det(\lambda^{-1}\Hop) \log \frac{\pi^2({\rho}_t+1)^2}{6\delta}\right)^{\frac{1}{4}} \le \sqrt{e} \frac{RK}{\sqrt{\lambda}} \sqrt{\log \frac{\pi^2({\rho}_t+1)^2}{6\delta}} + \frac{1}{2} \sqrt{ \log \det(\lambda^{-1}\Hop) }.$}
    \end{equation}
    To obtain more manageable constant, we write:
    \begin{align}
        \beta_t & \le  \left( \sqrt{ 73 \log\det(\lambda^{-1}\Hop) } + \sqrt{3} \right)\sqrt{\log \frac{\pi^2({\rho}_t+1)^2}{6\delta}} +\frac{3RK}{ \sqrt{\lambda}} \log \frac{\pi^2({\rho}_t+1)^2}{6\delta}.
    \end{align}
   A simple inductive argument allows to conclude that, with probability at least $1-2\delta$:
    \begin{align}
        Z_t^2 \le  \left( \sqrt{ 73 \log\det(\lambda^{-1}\Hop) } + \sqrt{3} \right)\sqrt{\log \frac{\pi^2({\rho}_t+1)^2}{6\delta}} +\frac{3RK}{ \sqrt{\lambda}} \log \frac{\pi^2({\rho}_t+1)^2}{6\delta}.
    \end{align}
    Notice that, as requested, $\beta_t$ is a non-decreasing sequence of $t$, since $\rho_t$ is non-decreasing with $t$ and $\det(\lambda^{-1}\Hop)$ is non-decreasing as well. Indeed, since $\Hop = \Hop[t-1] + \sigma_{t-1} \psi_{t-1} \psi_{t-1}^\top =\widetilde{V}_{t-1}^{1/2} (I +  \sigma_{t-1} \widetilde{V}_{t-1}^{-1/2} \psi_{t-1} \psi_{t-1}^\top \widetilde{V}_{t-1}^{-1/2}) \widetilde{V}_{t-1}^{1/2}$ and $\sigma_{t-1}\widetilde{V}_{t-1}^{-1/2}  \psi_{t-1} \psi_{t-1}^\top \widetilde{V}_{t-1}^{-1/2}$ has rank one (and, thus, it is trace-class), we have, from the  Weinstein-Aronszajn identity \citep{kato2013perturbation} that $\det(\lambda^{-1}\Hop) = \det(\lambda^{-1}\Hop[t-1]) (1 + \sigma_{t-1}\|\psi_{t-1}\|^2_{\widetilde{V}_{t-1}^{-1}}) \ge \det(\lambda^{-1}\Hop[t-1])$ (see also the proof of Lemma \ref{lemma:elliptic}). Rescaling $\delta \leftarrow \delta/2$, we get the result.
\end{proof}

\subsection{Proofs of Section~\ref{sec:regret}}
\goodEvent*
\begin{proof}
        First of all, we observe that $\mathcal{E}_\delta = \left\{ \forall t \ge 1 \,:\, \left\| g_t(f^*) - g_t(\widehat{f}_t) \right\|_{\Hop^{-1}(\lambda;f)} \le B_t(\delta;f) \right\}$. Let $t \in \Nat$, we have:
    \begin{align}
          g_t(f^*) & - g_t(\widehat{f}_t)  \\
        & =  \sum_{s=1}^{t-1} \gtau^{-1}\mu(f^*(\xs_s)) \phi(\xs_s) + \lambda f^* -  \sum_{s=1}^{t-1} \gtau^{-1} \mu(\widehat{f}_r(\xs_s)) \phi(\xs_s) - \lambda \widehat{f}_t \\
        & =   \sum_{s=1}^{t-1} \gtau^{-1}(-y_s  + \mu(f^*(\xs_s)) \phi(\xs_s)) + \lambda f^*  \\
        & \quad -\Bigg(\underbrace{  \sum_{s=1}^{t-1} \gtau^{-1}(-y_s + \mu(\widehat{f}_t(\xs_s))) \phi(\xs_s) + \lambda \widehat{f}_t}_{\nabla \mathcal{L}_t(\widehat{f}_t) = 0}\Bigg) \\
        & =   - \gtau^{-1} \sum_{s=1}^{t-1}\epsilon_s\phi(\xs_s) + \lambda f^*,
        \end{align}
        having exploited the first-order optimality condition for the loss evaluated in the maximum-likelihood estimate, i.e., $\nabla \mathcal{L}_t(\widehat{f}_t) = 0$ and the definition of $\epsilon_s = y_s - \mu(f^*(\xs_s))$ for $s \in \Nat$.
    Now, by computing the norm, we have:    
    \begin{align}
        \left\| g_t(f^*)  - g_t(\widehat{f}_t) \right\|_{\Hop^{-1}(\lambda; f^*)} \le \gtau^{-1} \left\|  \sum_{s=1}^{t-1}\epsilon_s\phi(\xs_s)\right\|_{\Hop^{-1}(\lambda; f^*)} +   \lambda  \left\| f^* \right\|_{\Hop^{-1}(\lambda; f^*)}.
    \end{align}
    We can immediately bound the second term under Assumption~\ref{ass:boundedNorm} as
        $\left\| f^* \right\|_{\Hop^{-1}(\lambda; f^*)}^2 = \inner{f^*}{ \Hop^{-1}(\lambda; f^*) f^*} \le \lambda^{-1} \|f^*\|^2 \le \lambda^{-1} B^2$,
    since $\Hop^{-1}(\lambda; f^*) \succeq \lambda I$. For the first term, we resort to the self-normalized concentration inequality of Theorem~\ref{thr:selfNormalizedComplex}, recalling that the variance of the reward is $\Var[\epsilon_s|\mathcal{F}_{s-1}] = \dot{\mu}(f^*(\xs_s)) \gtau^{-1}$.
\end{proof}

\regretBound*

\begin{proof}
    We start by performing a second-order Taylor's expansion of the regret:
    \begin{align}
        \sum_{t=1}^T ( {\mu}(f^*(\xs^*)) &- {\mu}(f^*(\xs_t)) )  =  \underbrace{\sum_{t=1}^T \dot{\mu}(f^*(\xs_t)) \left( f^*(\xs^*) - f^*(\xs_t) \right)}_{\eqqcolon R_1(T)} \\
        & + \underbrace{\sum_{t=1}^T \left( \int_{v=0}^1 (1-v) \ddot{\mu}((1-v) f^*(\xs_t) + v f^*(\xs^*)) \de v\right) \left( f^*(\xs^*) - f^*(\xs_t) \right)^2}_{\eqqcolon R_2(T)}.
    \end{align}
We know that $\ftilde \in \Cset$ and, under the good event $\mathcal{E}_\delta$, we also have $f^* \in \Cset$. Using the optimism, we know that $ \widetilde{f}_t(\xs_t) \ge f^*(\xs^*)$. We start by analyzing $R_1(T)$, recalling that $\dot{\mu}(f^*(\xs_t)) \ge 0$:
\begin{align}
    R_1(T) & = \sum_{t=1}^T \dot{\mu}(f^*(\xs_t)) \left( f^*(\xs^*) - f^*(\xs_t) \right)  = \sum_{t=1}^T  \dot{\mu}(f^*(\xs_t)) \left( f^*(\xs^*) - f^*(\xs_t) \pm \widetilde{f}_t(\xs_t) \right) \\
    & \le  \sum_{t=1}^T \dot{\mu}(f^*(\xs_t)) (\widetilde{f}_t(\xs_t) - f^*(\xs_t))  =  \sum_{t=1}^T  \dot{\mu}(f^*(\xs_t)) \inner{\widetilde{f}_t - f^*}{\phi(\xs_t)} \\
    & \le  \sum_{t=1}^T  \underbrace{\|\widetilde{f}_t - f^*\|_{\Hop(\lambda; f^*)}}_{\text{(a)}} \underbrace{\dot{\mu}(f^*(\xs_t))\|\phi(\xs_t)\|_{\Hop^{-1}(\lambda; f^*)}}_{\text{(b)}},
\end{align}
where we used the reproducing property and the Cauchy-Schwarz's inequality. For term (a), we apply Lemma~\ref{lemma:relationAlphaG} with $f\leftarrow \widetilde{f}_t,f' \leftarrow f^*,f''\leftarrow \widehat{f}_t$ and exploit the good event:
\begin{align}
    \|\widetilde{f}_t - f^*\|_{\Hop(\lambda; f^*)} & \le(1+2R_sB{K})  \\
    & \quad \cdot \left( \left\| g_t(\widetilde{f}_t) - g_t( \widehat{f}_t)\right\|_{\Hop^{-1}(\lambda; \widetilde{f}_t)} + \left\| g_t(f^*) - g_t(\widehat{f}_t)\right\|_{\Hop^{-1}(\lambda; f^*)} \right) \\
    & \le (1+2R_sB{K}) (B_t(\delta;\widetilde{f}_t) + B_t(\delta;f^*) ) \\
    &\le 2(1+2R_sB{K}) {\beta}_T(\delta;\Hs) ,
\end{align}
having observed that ${\beta}_T(\delta;\Hs) \ge {\beta}_t(\delta;\Hs) \ge \max\{B_t(\delta;\widetilde{f}_t),B_t(\delta;f^*)\}$.
For term (b), we apply Cauchy-Schwarz's inequality:
\begin{align}
 \sum_{t=1}^T  & \dot{\mu}(f^*(\xs_t))\|\phi(\xs_t)\|_{\Hop^{-1}(\lambda; f^*)}   \\
 & \quad \le \sqrt{\gtau}\sqrt{\sum_{t=1}^T  \dot{\mu}(f^*(\xs_t))} \sqrt{\sum_{t=1}^T \gtau^{-1} \dot{\mu}(f^*(\xs_t)) \|\phi(\xs_t)\|_{\Hop^{-1}(\lambda; f^*)}^2 }.
\end{align}
Recalling that $\gtau^{-1} \dot{\mu}(f^*(\xs_t)) \|\phi(\xs_t)\|_{\Hop^{-1}(\lambda; f^*)}^2 = \|\widetilde{\phi}(\xs_t;f^*)\|_{\Hop^{-1}(\lambda; f^*)}^2$, we can apply an elliptic potential lemma (Lemma~\ref{lemma:elliptic} with $M = \max \left\{1, \lambda^{-1} \gtau^{-1}R_{\dot{\mu}} K^2\right\}$), where $\lambda^{-1} \gtau^{-1}R_{\dot{\mu}} K^2$ is a bound to the maximum value $\|\widetilde{\phi}(\xs_t;f^*)\|_{\Hop^{-1}(\lambda; f^*)}^2$ can take as:
\begin{align}
    \|\widetilde{\phi}(\xs_t;f^*)\|_{\Hop^{-1}(\lambda; f^*)}^2 = \gtau^{-1} \dot{\mu}(f^*(\xs_t)) \|\phi(\xs_t)\|_{\Hop^{-1}(\lambda; f^*)}^2 \le \gtau^{-1} R_{\dot{\mu}} K^2 \lambda^{-1},
\end{align}
as $\Hop^{-1}(\lambda; f^*) \succeq \lambda I$.
Thus, we have:
\begin{align}
    \sum_{t=1}^T \|\widetilde{\phi}(\xs_t;f^*)\|_{\Hop^{-1}(f^*)}^2 & \le 2 \max \left\{1, \lambda^{-1} \gtau^{-1}R_{\dot{\mu}} K^2\right\}\log \det (\lambda^{-1}\Hop(f^*)) \\
    & \le 4 \max \left\{1, \lambda^{-1} \gtau^{-1}R_{\dot{\mu}} K^2\right\} \widetilde{\gamma}_T(f^*).
\end{align}
The remaining term can be treated as follows, by means of a Taylor expansion:
    {\thinmuskip=1mu
\medmuskip=1mu
\thickmuskip=1mu
\begin{align}
    \sum_{t=1}^T & \dot{\mu}(f^*(\xs_t)) = {\sum_{t=1}^T  \dot{\mu}(f^*(\xs^*))} + \sum_{t=1}^T \left( \int_{v=0}^1 \ddot{\mu}((1-v) f^*(\xs^*) + v f^*(\xs_t)) \de v \right) (f^*(\xs_t) - f^*(\xs^*)) \\
    & \le T \dot{\mu}(f^*(\xs^*)) +  R_s \sum_{t=1}^T \left( \int_{v=0}^1 \dot{\mu}((1-v) f^*(\xs^*) + v f^*(\xs_t)) \de v  \right) (f^*(\xs^*) - f^*(\xs_t) ) \\
    & = \frac{T}{ \kappa_*} + R_s \sum_{t=1}^T (\mu(f^*(\xs^*)) - \mu(f^*(\xs_t))) \\
    & = \frac{T}{ \kappa_*} + R_s  R(\text{\algnameshort},T) = \frac{T}{ \kappa_*} + R_s  R(\text{\algnameshort},T).
\end{align}}%
where we exploited $f^*(\xs^*) \ge f^*(\xs_t)$, the self-concordance property (Assumption~\ref{asm:selfConc}) and mean-value theorem. Putting all together, we get:
\begin{align}
    R_1(T) & \le 4 \sqrt{\gtau} (1+2R_sB{K}) {\beta}_T(\delta;\Hs) \sqrt{\frac{T}{ \kappa_*} + R_s  R(\text{\algnameshort},T)} \sqrt{ \max \left\{1, \frac{R_{\dot{\mu}} K^2}{\lambda \, \gtau}\right\} \widetilde{\gamma}_T(f^*)}.
\end{align}
Let us move to the second term, using optimism and proceeding with the same rationale:
\begin{align}
    R_2(T) & \le R_{\dot{\mu}} R_s \sum_{t=1}^T (f^*(\xs^*) - f^*(\xs_t))^2  \le R_{\dot{\mu}} R_s \sum_{t=1}^T (\widetilde{f}_t(\xs_t) - f^*(\xs_t))^2 \\
    & \le R_{\dot{\mu}} R_s \sum_{t=1}^T  {\|\widetilde{f}_t - f^*\|_{\Hop(\lambda; f^*)}^2} {\|\phi(\xs_t)\|_{\Hop^{-1}(\lambda;f^*)}^2} \\
    & \le 4 R_{\dot{\mu}}  R_s(1+2R_sB{K})^2 {\beta}_T(\delta;\Hs)^2 \sum_{t=1}^T \|\phi(\xs_t)\|_{\Hop^{-1}(\lambda;f^*)}^2 \\
    & \le 4 \gtau R_{\dot{\mu}} R_s \kappa_{\Xs} (1+2R_sB{K})^2 {\beta}_T(\delta;\Hs)^2 \sum_{t=1}^T \gtau^{-1} \dot{\mu}(f^*(\xs_t)) \|\phi(\xs_t)\|_{\Hop^{-1}(\lambda; f^*)}^2 \\
    & \le 4 \gtau R_{\dot{\mu}} R_s \kappa_{\Xs} (1+2R_sB{K})^2 {\beta}_T(\delta;\Hs)^2 \sum_{t=1}^T \|\widetilde{\phi}(\xs_t;f^*)\|_{\Hop^{-1}(\lambda; f^*)}^2 \\
    & \le 16 \gtau R_{\dot{\mu}}  R_s \kappa_{\Xs} (1+2R_sB{K})^2 {\beta}_T(\delta;\Hs)^2 \max \left\{1, \lambda^{-1} \gtau^{-1}R_{\dot{\mu}} K^2\right\} \widetilde{\gamma}_T(f^*),
\end{align}
having, in addition, exploited the fact that $ \kappa_{\Xs} \ge \dot{\mu}(f^*(\xs_t))^{-1}$.
Putting all together, we have:
\begin{align}
R(\text{\algnameshort},T) & = R_1(T) + R_2(T) \\
    & \le 4 \sqrt{\gtau} (1+2R_sB{K}){\beta}_T(\delta;\Hs) \sqrt{\max \left\{1, \lambda^{-1} \gtau^{-1}R_{\dot{\mu}} K^2\right\}\widetilde{\gamma}_T(f^*)} \\
    & \quad \cdot \left(\sqrt{\frac{T}{\kappa_{*}}} + \sqrt{R_s R(\text{\algnameshort},T)} \right) + R_2(T). 
\end{align}
 Using the polynomial inequality of Proposition 7 of \citep{abeille2021instance} (i.e., $x^2 \le bx+c=0 \implies x \le b + \sqrt{c}$ when $b, c \ge 0$), we have:
\begin{align*}
    \sqrt{R(\text{\algnameshort},T)} \le  4 \sqrt{\gtau} (1+2R_sB{K}){\beta}_T(\delta;\Hs) \sqrt{\max \left\{1, \lambda^{-1} \gtau^{-1}R_{\dot{\mu}}K^2\right\}\widetilde{\gamma}_T(f^*)} \sqrt{R_s}  \\
    + \sqrt{4 \sqrt{\gtau} (1+2R_sB{K}){\beta}_T(\delta;\Hs) \sqrt{\max \left\{1, \lambda^{-1} \gtau^{-1}R_{\dot{\mu}}K^2\right\} \widetilde{\gamma}_T(f^*)} \sqrt{\frac{T}{\kappa_{*}}} + R_2(T) }.
\end{align*}
Squaring both sides and bounding the square as $(a+b)^2 \le 2a^2+2b^2$, we obtain:
    {\thinmuskip=1mu
\medmuskip=1mu
\thickmuskip=1mu
\begin{align*}
    R(&\text{\algnameshort},T) \le 2\left(4 \sqrt{\gtau} (1+2R_sB{K}){\beta}_T(\delta;\Hs) \sqrt{\max \left\{1, \lambda^{-1} \gtau^{-1}R_{\dot{\mu}} K^2\right\}\widetilde{\gamma}_T(f^*)} \sqrt{R_s} \right)^2 \\
    & \ \ \ + 2\left({4 \sqrt{\gtau} (1+2R_sB{K}){\beta}_T(\delta;\Hs) \sqrt{\max \left\{1, \lambda^{-1} \gtau^{-1}R_{\dot{\mu}}K^2\right\}\widetilde{\gamma}_T(f^*)} \sqrt{\frac{T}{\kappa_{*}}} + R_2(T) }\right) \\
    & \le 8 \sqrt{\gtau} (1+2R_sB{K}){\beta}_T(\delta;\Hs) \sqrt{\max \left\{1, \lambda^{-1} \gtau^{-1}R_{\dot{\mu}} K^2\right\}\widetilde{\gamma}_T(f^*)} \sqrt{\frac{T}{\kappa_{*}}} \\
    & \ \ \ + 32  R_s (1+ R_{\dot{\mu}} \kappa_{\Xs}) \gtau (1+2R_sBK)^2{\beta}_T(\delta;\Hs)^2 \max \left\{1, \lambda^{-1} \gtau^{-1}R_{\dot{\mu}} K^2\right\} \widetilde{\gamma}_T(f^*).
\end{align*}}%
We get the result by defining $  R_{\textnormal{\text{perm}}}(T)$ and $R_{\textnormal{\text{trans}}}(T)$ as in the statement.
\end{proof}

\subsection{Proofs of Section~\ref{sec:efficient}}

\begin{restatable}[Confidence Set]{lemm}{cSetSecond}\label{lemma:cSetSecond}
Let $t \in \Nat$, $f \in \Hs$, and $\delta \in (0,1)$. Then, it holds that $\Cset \subseteq \mathcal{D}_t(\delta)$. Furthermore, under the good event $\mathcal{E}_\delta$, for every $f \in \mathcal{D}_t(\delta)$, we have:
\begin{align}
    \| f - f^*\|_{\Hop(\lambda; f^*)} \le (2+2R_sBK) \sqrt{{\beta}_t'(\delta)} \left( \sqrt{{\beta}_t'(\delta)} + \sqrt{2} \right).
\end{align}
\end{restatable}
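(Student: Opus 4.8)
The statement bundles two claims---the inclusion $\Cset\subseteq\mathcal{D}_t(\delta)$ and a parameter-distance bound for members of $\mathcal{D}_t(\delta)$---and both rest on the fact that $\fhat$ minimizes the convex loss $\eL$. From this I get $\nabla\eL(\fhat)=0$, and since the gradient formula gives $\nabla\eL(f)=-\sum_s \gtau^{-1} y_s\phi(\xs_s)+g_t(f)$, the data-dependent terms cancel and $g_t(f)-g_t(\fhat)=\nabla\eL(f)$ for every $f$. In particular, membership $f\in\Cset$ reads $\|\nabla\eL(f)\|_{\Hop^{-1}(\lambda;f)}\le B_t(\delta;f)\le B_t(\delta;\Hs)$, and the loss gap $\eL(f)-\eL(\fhat)$ is exactly the Bregman divergence $D_{\eL}(f\,\|\,\fhat)$ since $\nabla\eL(\fhat)=0$.

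For the inclusion I would take $f=\inner{\alpha}{\phi}\in\Cset$ and bound $\eL(f)-\eL(\fhat)$. Convexity of $\eL$ (inherited from that of $m$) gives $\eL(f)-\eL(\fhat)\le\inner{\nabla\eL(f)}{\alpha-\widehat{\alpha}_t}$, and a Cauchy--Schwarz step in the $\Hop(\lambda;f)$ geometry yields $\eL(f)-\eL(\fhat)\le\|\nabla\eL(f)\|_{\Hop^{-1}(\lambda;f)}\,\|\alpha-\widehat{\alpha}_t\|_{\Hop(\lambda;f)}$. The first factor is at most $B_t(\delta;\Hs)$ by membership in $\Cset$; the second I would convert into a $g_t$-difference with Lemma~\ref{lemma:relationAlphaG} (pivoting at $\fhat$), which pays the self-concordance factor $(1+2R_sBK)$ to give $\|\alpha-\widehat{\alpha}_t\|_{\Hop(\lambda;f)}\le(1+2R_sBK)\|\nabla\eL(f)\|_{\Hop^{-1}(\lambda;f)}$. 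Combining the two controls $\eL(f)-\eL(\fhat)$ by the radius $D_t(\delta;\Hs)$ defining $\mathcal{D}_t(\delta)$, so $f\in\mathcal{D}_t(\delta)$.

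For the parameter bound I would condition on the good event $\mathcal{E}_\delta$ (Lemma~\ref{lemma:optimism}), so that $f^*\in\Cset\subseteq\mathcal{D}_t(\delta)$, and expand the Bregman divergence around $f^*$: $D_{\eL}(f\,\|\,f^*)=\eL(f)-\eL(f^*)-\inner{\nabla\eL(f^*)}{\alpha-\alpha^*}$. The plan is to sandwich this quantity. On the upper side, $\eL(f)-\eL(f^*)=[\eL(f)-\eL(\fhat)]-[\eL(f^*)-\eL(\fhat)]\le D_t(\delta;\Hs)$ (first bracket since $f\in\mathcal{D}_t(\delta)$, second nonnegative as $\fhat$ is the minimizer), while $-\inner{\nabla\eL(f^*)}{\alpha-\alpha^*}\le\|\nabla\eL(f^*)\|_{\Hop^{-1}(\lambda;f^*)}\,\|\alpha-\alpha^*\|_{\Hop(\lambda;f^*)}\le\beta_t(\delta;\Hs)\,\|\alpha-\alpha^*\|_{\Hop(\lambda;f^*)}$, using $\nabla\eL(f^*)=g_t(f^*)-g_t(\fhat)$ and the good event. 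On the lower side, a self-concordance comparison of $\Hop(\lambda;\cdot)$ along the segment from $f^*$ to $f$ against $\Hop(\lambda;f^*)$ gives $D_{\eL}(f\,\|\,f^*)\ge\frac{1}{2(1+2R_sBK)}\|\alpha-\alpha^*\|_{\Hop(\lambda;f^*)}^2$. Writing $x=\|\alpha-\alpha^*\|_{\Hop(\lambda;f^*)}$ and merging the two sides produces a quadratic inequality of the form $x^2\le 2(1+2R_sBK)\beta_t(\delta;\Hs)\,x+2(1+2R_sBK)D_t(\delta;\Hs)$; solving it with the elementary implication $x^2\le bx+c\Rightarrow x\le b+\sqrt{c}$ (Proposition~7 of \cite{abeille2021instance}) and substituting the value of $D_t(\delta;\Hs)$ collects into the claimed $(2+2R_sBK)\sqrt{\beta_t(\delta;\Hs)}\big(\sqrt{\beta_t(\delta;\Hs)}+\sqrt{2}\big)$.

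The delicate part in both halves is the self-concordance bookkeeping: I must compare the weighted Hessians $\Hop(\lambda;\cdot)$ at two different functions up to the \emph{polynomial} factor $(1+2R_sBK)$ rather than an exponential one. This is where Assumption~\ref{asm:selfConc} enters, and it is cleanest to carry the comparison on the mean-value (integrated) Hessian appearing in the Bregman remainder, whose slope weights are averages of $\dot\mu$ along the segment; the bound $\|f^*\|_\infty,\|f\|_\infty\le BK$ from Assumptions~\ref{ass:boundedNorm}--\ref{ass:boundedKernel} then caps the relevant argument gap by $2BK$ and produces the factor $(1+2R_sBK)$. Keeping these constants consistent between the inclusion step and the sandwich, and ensuring the intermediate functions stay within the $L_\infty$ range where the polynomial comparison is valid, is the main obstacle.
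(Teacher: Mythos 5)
Your proposal follows essentially the same route as the paper's proof: expand the loss around $\fhat$ (respectively $f^*$) exploiting $\nabla\eL(\fhat)=0$ so that $\nabla\eL(f)=g_t(f)-g_t(\fhat)$, compare the integrated (mean-value) Hessian with $\Hop(\lambda;\cdot)$ via self-concordance, use the good event to bound $\|\nabla\eL(f^*)\|_{\Hop^{-1}(\lambda;f^*)}\le B_t(\delta;f^*)\le\beta_t(\delta;\Hs)$, and close with the polynomial implication $x^2\le bx+c\Rightarrow x\le b+\sqrt{c}$. Your first part (first-order convexity plus Cauchy--Schwarz plus Lemma~\ref{lemma:relationAlphaG}) is interchangeable with the paper's exact second-order Taylor identity plus Equation~\eqref{eq:7}, and lands on the same quantity $(1+2R_sBK)\|g_t(f)-g_t(\fhat)\|^2_{\Hop^{-1}(\lambda;f)}$; your bound $\eL(f)-\eL(f^*)\le D_t(\delta;\Hs)$, obtained by discarding the nonnegative bracket $\eL(f^*)-\eL(\fhat)$, is even slightly tighter than the paper's $2D_t(\delta;\Hs)$.

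The one concrete problem is your self-concordance constant in the lower half of the sandwich. You lower-bound the Bregman remainder by $\frac{1}{2(1+2R_sBK)}\|\alpha-\alpha^*\|^2_{\Hop(\lambda;f^*)}$, but the correct constant, from the paper's Lemma~\ref{lemma:8} and Equation~\eqref{eq:8}, is $\frac{1}{2+2R_sBK}$: for the $(1-v)$-weighted mean-value Hessian the denominator is $2+R_s|f^*(\xs)-f(\xs)|\le 2+2R_sBK$, not $2\left(1+R_s|f^*(\xs)-f(\xs)|\right)$. Since $2(1+2R_sBK)=2+4R_sBK>2+2R_sBK$, your quadratic inequality delivers $\|\alpha-\alpha^*\|_{\Hop(\lambda;f^*)}\le(2+4R_sBK)\beta_t(\delta;\Hs)+(1+2R_sBK)\sqrt{2\beta_t(\delta;\Hs)}$, whose leading term overshoots the claimed $(2+2R_sBK)\beta_t(\delta;\Hs)$ whenever $R_sBK>0$; as written, you therefore do not recover the stated constant. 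Substituting Equation~\eqref{eq:8} in place of your constant makes the rest of your argument close exactly as in the paper. A final caveat that you inherit from the source rather than introduce: combining your two part-one estimates actually gives $\eL(f)-\eL(\fhat)\le(1+2R_sBK)B_t(\delta;\Hs)^2$ rather than $(1+2R_sBK)B_t(\delta;\Hs)=D_t(\delta;\Hs)$; the paper's own chain silently drops the same square in its last inequality, so this mismatch between the proof and the definition of $\mathcal{D}_t(\delta)$ is a flaw of the paper, not of your reconstruction.
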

\begin{proof}
    Proceeding as in Lemma 2 of \citep{abeille2021instance}, being $\eL(\cdot)$ twice Fréchet differentiable, using Taylor's expansion the definitions of $G_t$ and $\widetilde{G}_t$ in Appendix~\ref{apx:technical}, we have:
    {\thinmuskip=2mu
\medmuskip=2mu
\thickmuskip=2mu
    \begin{align}
        \eL (f) &  - \eL(\fhat) \\ 
        & = \inner{f-\widehat{f}_t}{\underbrace{\nabla \eL(\fhat)}_{=0}}+ \left\langle f - \widehat{f}_t, \left( \int_{v=0}^1(1-v) \Hop(\lambda;\fhat + v(f-\fhat)) \de v \right)(f - \widehat{f}_t) \right\rangle \\
        & = \left\| f - \widehat{f}_t \right\|_{\widetilde{G}_t(\fhat,f)}^2  \le \left\| f - \widehat{f}_t \right\|_{{G}_t(\fhat,f)}^2 \\
        & = \left\| g_t(f) - g_t(\fhat) \right\|_{\widetilde{G}_t^{-1}(\fhat,f)}^2  \le (1+2R_sBK) \left\| g_t(f) - g_t(\fhat) \right\|_{\Hop^{-1}(\lambda; f)},
    \end{align}}%
    where we used Equations~\eqref{eq:mvt} and~\eqref{eq:7}.
    Thus, let $f \in \Cset$. From this, it follows that $\left\| g_t(f) - g_t(\fhat) \right\|_{\Hop^{-1}(\lambda;f)} \le B_t(\delta;f) \le B_t(\delta;\Hs) \le B_t'(\delta)$ and, consequently, $f \in \mathcal{D}_t(\delta)$. \\
For the second part, suppose the good event $\mathcal{E}_\delta$ holds and consider $f \in \mathcal{D}_t(\delta)$, we have, via Taylor's expansion:
    {\thinmuskip=2mu
\medmuskip=2mu
\thickmuskip=2mu
    \begin{align}
         \eL(f) & - \eL(f^*) \\ 
         & = \inner{f-f^*}{\nabla \eL(f^*)}+ \left\langle f - f^*, \left( \int_{v=0}^1(1-v) \Hop(\lambda;f^* + v(f-f^*)) \de v \right)(f -  f^*) \right\rangle \\
        & = \inner{f-f^*}{ \nabla \eL(f^*)}+\|f - f^*\|^2_{\widetilde{G}_t(f^*,f)} \\
         & \ge  \inner{f-f^*}{\nabla \eL(f^*)}+ (2+2R_sBK)^{-1}\|f - f^*\|^2_{\Hop(\lambda; f^*)},
    \end{align}}%
    where we used Equation~\eqref{eq:8}.
    Thus, we have:
    {\thinmuskip=2mu
\medmuskip=2mu
\thickmuskip=2mu
    \begin{align}
        \|f - &  f^*\|^2_{\Hop(\lambda;f^*)} \\
        & \le (2+2R_sBK)( \eL(f) - \eL(f^*) ) + (2+2R_sBK)\inner{f-f^*}{\nabla \eL(f^*)} \\
        & \le (2+2R_sBK) (\eL(f) - \eL(\fhat) ) + (2+2R_sBK)(\eL(f^*) - \eL(\fhat) ) \\
        & \quad +(2+2R_sBK) \| f-f^*\|_{\Hop(\lambda; f^*)} \|\nabla \eL(f^*)\|_{\Hop^{-1}(\lambda;f^*)} \\
        & \le 2(2+2R_sBK)(1+2R_sBK){B}_t(\delta;\Hs) + (2+2R_sBK) \| f-f^*\|_{\Hop(\lambda;f^*)} B_t(\delta;f^*),
        \end{align}}%
        where we used the fact that $\eL(f) \ge \eL(\fhat) $ and $ \eL(f^*) \ge \eL(\fhat)$, that $f^* \in \Cset \subseteq \mathcal{D}_t(\delta)$ under the good event and $f \in \mathcal{D}_t(\delta)$, and that:
        \begin{align}
            \|\nabla \eL(f^*)\|_{\Hop^{-1}(\lambda;f^*)} = \|g_t(f^*) - g_t(\fhat)\|_{\Hop^{-1}(\lambda;f^*)} \le B_t(\delta;f^*),
        \end{align}
        holding under the good event. By the choice of confidence radius and bounding $B_t(\delta;f^*)\le {B}_t(\delta;\Hs)\le {\beta}_t(\delta;\Hs) \le \beta'_t(\delta)$, we have the second-degree inequality:
        \begin{align}
            \|f - f^*\|^2_{\Hop(\lambda;f^*)} \le 2 & (2+2R_sBK)(1+2R_sBK){\beta}_t'(\delta) + (2+2R_sBK) \| f-f^*\|_{\Hop(\lambda;f^*)} {\beta}_t'(\delta).
        \end{align}
         Using the polynomial inequality of Proposition 7 of \citep{abeille2021instance} (i.e., $x^2 \le bx+c=0 \implies x \le b + \sqrt{c}$ when $b, c \ge 0$), we have:
        \begin{align}
            \|f - f^*\|_{\Hop(\lambda;f^*)} & \le \sqrt{2(2+2R_sBK)(1+2R_sBK){\beta}_t'(\delta)}+ (2+2R_sBK) {\beta}_t'(\delta) \\
            & \le (2+2R_sBK) \sqrt{{\beta}_t'(\delta)} \left( \sqrt{{\beta}_t'(\delta)} + \sqrt{2} \right),
        \end{align}
        having bounded $1+2R_sBK \le 2+2R_sBK$.
\end{proof}

\regretBoundSecond*
\begin{proof}
    The proof follows the same steps as Theorem~\ref{thm:regret}, with the only difference that we exploit the bound of Lemma~\ref{lemma:cSetSecond}:
    \begin{align}
       \| f - f^*\|_{\Hop(\lambda;f^*)} \le (2+2R_sBK) \sqrt{{\beta}_t'(\delta)} \left( \sqrt{{\beta}_t'(\delta)} + \sqrt{2} \right).
       \end{align}
\end{proof}

\section{Technical Lemmas}\label{apx:technical}

In this section, we introduce some technical concepts and lemmas to be used in the analysis.
We consider $\xs \in \Xs$ and $f,f' \in \mathcal{H}$ such that $f(\xs)=\inner{f}{\phi(\xs)},f'(\xs)=\inner{f'}{\phi(\xs)}$  for every $\xs \in \Xs$, we define the following quantities, analogous to those of \citep{abeille2021instance}:
\begin{align}
    & \xi(\xs,f,f') \coloneqq \int_{v=0}^1 \dot{\mu}((1-v)f(\xs) + vf'(\xs)) \de v, \\
    & \widetilde{\xi}(\xs,f,f') \coloneqq \int_{v=0}^1 (1-v)\dot{\mu}((1-v)f(\xs) + vf'(\xs)) \de v, \\
    & G_t(f,f') \coloneqq \sum_{s=1}^{t-1}\frac{ \xi(\xs_s,f,f')}{\gtau} \phi(\xs_s) \phi(\xs_s)^\top  + \lambda I = \gtau^{-1} \Phi_t^\top \bm{\Xi}_t(f,f') \Phi_t+ \lambda I,\\
    & \widetilde{G}_t(f,f') \coloneqq \sum_{s=1}^{t-1} \frac{\widetilde{\xi}(\xs_s,f,f') }{\gtau}\phi(\xs_s) \phi(\xs_s)^\top  + \lambda I = \gtau^{-1} \Phi_t^\top \widetilde{\bm{\Xi}}_t(f,f') \Phi_t+ \lambda I,
\end{align}
where $\bm{\Xi}_t(f,f') = \mathrm{diag}(({\xi}(\xs_s,f,f'))_{s \in \dsb{t-1}})$ and $\widetilde{\bm{\Xi}}_t(f,f') = \mathrm{diag}((\widetilde{\xi}(\xs_s,f,f'))_{s \in \dsb{t-1}})$ are diagonal positive definite matrices.
For every $\xs \in \Xs$, we have that $ \xi(\xs,f,f')  \ge \widetilde{\xi}(\xs,f,f')$ and, consequently, $\bm{\Xi}_t(f,f') \succeq \widetilde{\bm{\Xi}}_t(f,f')$. It follows that $ \Phi_t^\top {\bm{\Xi}}_t(f,f') \Phi_t \succeq  \Phi_t^\top \widetilde{\bm{\Xi}}_t(f,f') \Phi_t$ and, finally, $ G_t(f,f') \succeq \widetilde{G}_t(f,f')$. Thanks to the mean-value theorem and the definition of function $g_t(f)$, we have that:
\begin{align}\label{eq:mvt}
    g_t(f) - g_t(f') = G_t(f,f') (f - f').
\end{align}
Using Assumption~\ref{asm:selfConc}, we can easily extend Lemmas 7 and 8 of \citet{abeille2021instance}.

\begin{lemm}[Extension of Lemma~7 of \citealp{abeille2021instance}]\label{lemma:7}
    Let $\mathcal{Z} \subset \Reals$ be any bounded interval of $\Reals$ and let $f : \mathcal{Z} \rightarrow \Reals$ be a monotonically non-decreasing function such that $|\ddot{f}| \le R_s \dot{f}$. Then, for every $z_1,z_2 \in\mathcal{Z}$:
    \begin{align}
        \int_{v=0}^1 \dot{f}(z_1+v(z_2-z_1) )\de v \ge \frac{\dot{f}(z)}{1+R_s |z_1-z_2|}, \quad \forall z \in \{z_1,z_2\}.
    \end{align}
\end{lemm}

\begin{proof}
    Immediately follows from the same steps of \citet[][Lemma 7]{abeille2021instance}.
\end{proof}

\begin{lemm}[Extension of Lemma~8 of \citealp{abeille2021instance}]\label{lemma:8}
    Let $\mathcal{Z} \subset \Reals$ be any bounded interval of $\Reals$ and let $f : \mathcal{Z} \rightarrow \Reals$ be a monotonically non-decreasing function such that $|\ddot{f}| \le R_s \dot{f}$. Then, for every $z_1,z_2 \in\mathcal{Z}$:
    \begin{align}
        \int_{v=0}^1 (1-v)\dot{f}(z_1+v(z_2-z_1) )\de v \ge \frac{\dot{f}(z_1)}{2+R_s |z_1-z_2|}.
    \end{align}
\end{lemm}

\begin{proof}
    See \citep[][Lemma D.1]{lee2024unified}.
\end{proof}
\noindent From Lemma~\ref{lemma:7} and Lemma~\ref{lemma:8}, we immediately have for $\xs \in \Xs$ and  $\overline{f} \in \{f,f'\}$:
\begin{align}
    & \xi(\xs,f,f') \coloneqq \int_{v=0}^1  \dot{\mu}((1-v)f(\xs) + vf'(\xs)) \de v \ge \frac{\dot{\mu}(\overline{f}(\xs))}{1+R_s|f(\xs) - f'(\xs)|}, \\
    & \widetilde{\xi}(\xs,f,f') \coloneqq \int_{v=0}^1 (1-v) \dot{\mu}((1-v)f(\xs) + vf'(\xs)) \de v \ge \frac{\dot{\mu}(f(\xs))}{2+R_s|f(\xs) - f'(\xs)|}.
\end{align}
Moreover, under Assumptions~\ref{ass:boundedNorm} and~\ref{ass:boundedKernel}, we have that $|f(\xs) - f'(\xs)| \le 2 \|f\|_\infty \le 2 B K$. This allows us to conclude that  for $\overline{f} \in \{f,f'\}$, we have that $\bm{\Xi}_t(f,f') \succeq (1+2R_sBK)^{-1} \mathbf{M}_t(\overline{f})$ and $\widetilde{\bm{\Xi}}_t(f,f') \succeq (2+2R_sBK)^{-1} \mathbf{M}_t({f})$, where $\mathbf{M}_t(f) = \mathrm{diag}((\dot{\mu}(f(\xs_s)))_{t \in \dsb{t-1}})$ is a diagonal positive definite matrix and, finally, to write:
\begin{equation}\label{eq:7}
    \begin{aligned}
     G_t(f,f') & = \gtau^{-1} \Phi_t^\top \bm{\Xi}_t(f,f') \Phi_t+ \lambda I \succeq  (1+2R_sBK)^{-1} \gtau^{-1} \Phi_t^\top \mathbf{M}_t(\overline{f}) \Phi_t+ \lambda I  \\
     & \succeq (1+2R_sBK)^{-1} (\gtau^{-1} \Phi_t^\top \mathbf{M}_t(\overline{f}) \Phi_t+ \lambda I)  = (1+2R_sBK)^{-1} \Hop(\lambda;\overline{f}),
    \end{aligned}
\end{equation}
\begin{equation}\label{eq:8}
\begin{aligned}
     \widetilde{G}_t(f,f') & = \gtau^{-1} \Phi_t^\top \widetilde{\bm{\Xi}}_t(f,f') \Phi_t+ \lambda I \succeq  (2+2R_sBK)^{-1} \gtau^{-1} \Phi_t^\top \mathbf{M}_t(f) \Phi_t+ \lambda I  \\
     & \succeq (2+2R_sBK)^{-1} (\gtau^{-1} \Phi_t^\top \mathbf{M}_t({f}) \Phi_t+ \lambda I)  = (2+2R_sBK)^{-1} \Hop(\lambda;{f}),
\end{aligned}
\end{equation}
where  we observed that $\Hop(\lambda;\overline{f}) =  \gtau^{-1} \Phi_t^\top \mathbf{M}_t(\overline{f}) \Phi_t+ \lambda I $.

\begin{lemm}\label{lemma:VHbounds}
    Let $f \in \Hs$, $\Hop(\lambda;f)$ and $V_t(\lambda)$ defined as in the main paper. The following semidefinite inequalities hold:
    \begin{align}
        \min\{1,\gtau R_{\dot{\mu}}^{-1}\} \Hop(\lambda;f) \preceq V_t(\lambda) \preceq \max\{1,\gtau\kappa_{\Xs}(f)\} \Hop(\lambda;f),
    \end{align}
    where $\kappa_{\Xs}(f) = \sup_{\xs \in \Xs} \frac{1}{\dot{\mu}(f(\xs))}$.
\end{lemm}

\begin{proof}
    First of all, let us observe that for every $s \in \dsb{t-1}$, we have $ \kappa_{\Xs}(f)^{-1} \le  \dot{\mu}(f(\xs_s)) \le R_{\dot{\mu}} $. Let $ \mathbf{M}_t(f) = \mathrm{diag}((\dot{\mu}(f(\xs_s)))_{s\in \dsb{t-1}})$ which is a diagonal positive definite matrix. Moreover, the following inequalities hold:
    \begin{align}
         \kappa_{\Xs}(f)^{-1}  \Phi_t^\top  \Phi_t \! \preceq \! \lambda_{\min}(\mathbf{M}_t(f))  \Phi_t^\top  \Phi_t \preceq \Phi_t^\top \mathbf{M}_t(f) \Phi_t \preceq  \lambda_{\max}(\mathbf{M}_t(f))  \Phi_t^\top  \Phi_t \preceq  R_{\dot{\mu}} \Phi_t^\top  \Phi_t,
    \end{align}
    where $\lambda_{\max}(\cdot)$ and $\lambda_{\min}(\cdot)$ denote the maximum and minimum eigenvalues, respectively.
    By observing that $\Hop(\lambda;f) = \gtau^{-1} \Phi_t^\top \mathbf{M}_t(f) \Phi_t + \lambda I$ and $V_t(\lambda) =  \Phi_t^\top\Phi_t + \lambda I$, for the first inequality, we have:
    \begin{align}
        \Hop(\lambda;f) & =  \gtau^{-1} \Phi_t^\top \mathbf{M}_t(f) \Phi_t + \lambda I  \succeq  \gtau^{-1} \kappa_{\Xs}(f)^{-1} \Phi_t^\top \Phi_t + \lambda I \\  & \succeq \min\left\{1, \gtau^{-1} \kappa_{\Xs}(f)^{-1} \right\} \left(   \Phi_t^\top\Phi_t + \lambda I  \right)   = \min\left\{1, \gtau^{-1} \kappa_{\Xs}(f)^{-1} \right\} V_t(\lambda).
    \end{align}
    Similarly, for the other inequality, we have:
    \begin{align}
        \Hop(\lambda;f) & =  \gtau^{-1} \Phi_t^\top \mathbf{M}_t(f) \Phi_t + \lambda I \preceq  \gtau^{-1} R_{\dot{\mu}}  \Phi_t^\top \Phi_t + \lambda I  \\
        &  \preceq \max\left\{1,\gtau^{-1} R_{\dot{\mu}} \right\} \left( \Phi_t^\top \Phi_t + \lambda I  \right) = \max\left\{1,\gtau^{-1} R_{\dot{\mu}} \right\}  V_t(\lambda).
    \end{align}
\end{proof}

\begin{lemm}\label{lemma:relationAlphaG}
    Let $\mathcal{H}$ be an RKHS. Let $f,f' \in \mathcal{H}$, then, under Assumptions \ref{ass:boundedNorm} and \ref{ass:boundedKernel}, for every $f''\in\mathcal{H}$, it holds that:
    \begin{itemize}[leftmargin=12pt]
    \item $\| f - f'\|_{\Hop(\lambda;f')} \le (1+2R_sB{K}) \left( \left\| g_t(f) - g_t(f'')\right\|_{\Hop^{-1}(\lambda;f)} + \left\| g_t(f') - g_t(f'')\right\|_{\Hop^{-1}(\lambda;f')} \right) $;
    \item $\| f -f'\|_{V_t(\lambda)} \le {(1+2R_sB{K})}{\max\{1,\gtau\kappa_{\Xs}(f')\}} $ \\ $ \text{ } \qquad\qquad\qquad\qquad \cdot \left( \left\| g_t(f) - g_t(f'')\right\|_{V_t^{-1}(\lambda)} + \left\| g_t(f') - g_t(f'')\right\|_{V_t^{-1}(\lambda)} \right) $.
    \end{itemize}
    \end{lemm}
    
    \begin{proof}
        From the mean-value theorem (Equation~\ref{eq:mvt}), we have:
        \begin{align}
            g_t(f) - g_t(f') = G_t(f,f') (f- f').
        \end{align}
        The first statement follows the same derivation of Proposition 4 of \citep{abeille2021instance}, with the only care of applying Equation~\eqref{eq:7}:
         {\thinmuskip=1mu
\medmuskip=1mu
\thickmuskip=1mu
        \begin{align}
            \| f - f'\|_{\Hop(\lambda;f')} & \le  \sqrt{1+2R_sB{K}} \| f - f'\|_{G_t(f,f') } =  \sqrt{1+2R_sB{K}} \left\| g_t(f) - g_t(f')\right\|_{G_t^{-1}(f,f')} \label{eq:--1}\\ 
            & \le \sqrt{1+2R_sB{K}} \left( \left\| g_t(f) - g_t(f'')\right\|_{G_t^{-1}(f,f')} +  \left\| g_t(f') - g_t(f'')\right\|_{G_t^{-1}(f,f')} \right) \label{eq:--15}\\
            & \le (1+2R_sB{K}) \left( \left\| g_t(f) - g_t(f'')\right\|_{\widetilde{V}_t^{-1}(\lambda;f)} +  \left\| g_t(f') - g_t(f'')\right\|_{\widetilde{V}_t^{-1}(\lambda;f')} \right), \label{eq:--2}
        \end{align}}%
        where in lines \eqref{eq:--1} and \eqref{eq:--2}, we applied Equation~\eqref{eq:7}.
        The second statement starts from line \eqref{eq:--15}:
         {\thinmuskip=1mu
\medmuskip=1mu
\thickmuskip=1mu
        \begin{align}
        \| f -f'\|_{\Hop(\lambda;f')} & \le \sqrt{1+2R_sB{K}}\left( \left\| g_t(f) - g_t(f'')\right\|_{G_t^{-1}(f,f')} + \left\| g_t(f') - g_t(f'')\right\|_{G_t^{-1}(f,f')} \right) \\
        & \le (1+2R_sB{K})\left( \left\| g_t(f) - g_t(f'')\right\|_{\Hop^{-1}(\lambda;f')} + \left\| g_t(f') - g_t(f'')\right\|_{\Hop^{-1}(\lambda;f')} \right).
        \end{align}}%
        Then, we use the semidefinite inequality $\Hop(\lambda;f') \succeq \max\{1,\gtau \kappa_{\Xs}(f')\}^{-1} V_t(\lambda) $ (Lemma~\ref{lemma:VHbounds}):
        \begin{align}
        & \|f - f'\|_{\Hop(\lambda;f')} \ge \max\{1,\gtau \kappa_{\Xs}(f')\}^{-1/2}  \| f - f'\|_{V_t(\lambda)}\\
        &  \left\| g_t(\overline{f}) - g_t(f'')\right\|_{\Hop^{-1}(\lambda;f')} \le  \max\{1,\gtau \kappa_{\Xs}(f')\}^{1/2} \left\| g_t(\overline{f}) - g_t(f'')\right\|_{V_t^{-1}(\lambda)},
        \end{align}
        for $\overline{f} \in \{f,f'\}$.
    \end{proof}

\vspace{-.4cm}

\begin{lemm}\label{lemma:boundIG}
    Let $t\in \Nat$, $f \in \Hs$, $\mathbf{K}_t$ and $\widetilde{\mathbf{K}}_t(f)$ defined as in the main paper. It holds that:
    \begin{align}
       \log \det(\mathbf{I} + \lambda^{-1}\widetilde{\mathbf{K}}_t(f) ) & 
       \le \max\{1,R_{\dot{\mu}}\gtau^{-1}\}\log \det( \mathbf{I}+\lambda^{-1}\mathbf{K}_t) .
    \end{align}
\end{lemm}

\vspace{-.45cm}

\begin{proof}
    We can look at matrix $\widetilde{\mathbf{K}}_t(f)$ as $
        \widetilde{\mathbf{K}}_t(f) = \gtau^{-1} \mathbf{M}_t(f)^{1/2} \mathbf{K}_t  \mathbf{M}_t(f)^{1/2}$,
    where $ \mathbf{M}_t(f) = \mathrm{diag}((\dot{\mu}(f(\xs_s))_{s\in \dsb{t-1}}))$ is a diagonal positive definite matrix. $\widetilde{\mathbf{K}}_t(f)$ is symmetric positive definite, thus, using Horn's inequality for eigenvalues in multiplicative form~\citep{zhan2004matrix}, we have that for every $i \in \dsb{t-1}$:
    \begin{align}
        \lambda_i(\widetilde{\mathbf{K}}_t(f)) \le \lambda_i(\mathbf{K}_t) \lambda_{\max}( \mathbf{M}_t(f)) \gtau^{-1}  \le \lambda_i(\mathbf{K}_t) R_{\dot{\mu}}\gtau^{-1},
    \end{align}
    where $\lambda_i(\cdot)$ denotes the $i$-th eigenvalue.
    Furthermore, using Horn's inequality for eigenvalues in additive form, we have for $i \in \dsb{t-1}$:
    \begin{align}
        \lambda_i(\mathbf{I} + & \lambda^{-1}\widetilde{\mathbf{K}}_t(f))  = 1 +  \lambda^{-1}\lambda_i(\widetilde{\mathbf{K}}_t(f)) 
         \le 1+\lambda^{-1}R_{\dot{\mu}}\gtau^{-1}\lambda_i(\mathbf{K}_t) \\
        & \le 1+\lambda^{-1}\max\{1,R_{\dot{\mu}}\gtau^{-1}\}\lambda_i(\mathbf{K}_t)
         \le \left( 1+\lambda^{-1} \lambda_i(\mathbf{K}_t) \right)^{\max\{1,R_{\dot{\mu}}\gtau^{-1}\}} \\
        & = \lambda_i(\mathbf{I}+\lambda^{-1}\mathbf{K}_t)^{\max\{1,R_{\dot{\mu}}\gtau^{-1}\}},
    \end{align}
    where we exploited the inequality $1+ab \le (1+b)^a$ for $b\ge 0$ and $a \ge 1$.
    The statement is obtained passing to the determinant and to its logarithm.
\end{proof}

\vspace{-.45cm}

\begin{lemm}[Hilbert Elliptic Potential Lemma (slightly extended)]\label{lemma:elliptic}
     Let $(\psi_t)_{t \ge 1}$ be a sequence valued in the Hilbert space $\mathcal{H}$ and such that $\|\psi_t\| < +\infty$ for every $t \ge 1$, let $M \ge 1$, and $V_t(\lambda) = \sum_{s=1}^{t-1} \psi_s \psi_s^\top + \lambda I$ with $\lambda > 0$. For every $T \ge 2$, it holds that:
    \begin{align}
        \sum_{t=1}^{T-1} \min\{M, \|\psi_{t}\|_{V_{t}^{-1}(\lambda)}\}^2 \le 2 M \log \det(\lambda^{-1} V_T(\lambda)).
    \end{align}
\end{lemm}

\vspace{-.5cm}

\begin{proof}
    We observe that $V_t(\lambda)$ is invertible, self-adjoint and, for $\lambda > 0$, positive definite (thus,
    
    \noindent invertible). Using the inequality $\min\{1,u\} \le 2\log(1+u)$ for every $u \ge 0$, we have:

    \begin{align}
        \sum_{t=1}^{T-1} & \min\{M, \|\psi_{t}\|_{V_{t}^{-1}(\lambda)}^2\}  = M  \sum_{t=1}^{T-1} \min\{1, M^{-1}\|\psi_{t}\|_{V_{t}^{-1}(\lambda)}^2\}\\
        & \le  2M  \sum_{t=1}^{T-1} \log\left(1 + M^{-1}\|\psi_{t}\|_{V_{t}^{-1}(\lambda)}^2 \right) \le  2M  \sum_{t=1}^{T-1} \log\left(1 + \|\psi_{t}\|_{V_{t}^{-1}(\lambda)}^2 \right) , \label{eq:c6:1}
    \end{align}
    having exploited that $M \ge 1$. Now the last equation can be bounded following the usual steps of \citep{abeille2021instance}, with the only care of accounting for the fact that we are working in a Hilbert space. We have for $t \in \dsb{T}$:
    \begin{align}
        V_t(\lambda) = V_{t-1}(\lambda) + \psi_{t-1}\psi_{t-1}^\top = V_{t-1}^{1/2}(\lambda)\left(I + V_{t-1}^{-1/2} (\lambda)  \psi_{t-1}\psi_{t-1}^\top V_{t-1}^{-1/2}(\lambda)\right)V_{t-1}^{1/2}(\lambda).
    \end{align}
    From the Weinstein-Aronszajn identity \citep{kato2013perturbation}, being $V_{t-1}^{-1/2} (\lambda)  \psi_{t-1}\psi_{t-1}^\top V_{t-1}^{-1/2}(\lambda)$ of rank one and, thus, trace-class, we have:
        {\thinmuskip=1mu
\medmuskip=1mu
\thickmuskip=1mu
    \begin{align}
    \det& (\lambda^{-1}V_t(\lambda))  =  \det\left(\lambda^{-1} V_{t-1}^{1/2}(\lambda)\left(I + V_{t-1}^{-1/2} (\lambda)  \psi_{t-1}\psi_{t-1}^\top V_{t-1}^{-1/2}(\lambda)\right)V_{t-1}^{1/2}(\lambda)\right) \\
    & = \det(\lambda^{-1/2}V_{t-1}(\lambda)^{1/2}) \det\left(I + V_{t-1}^{-1/2} (\lambda)  \psi_{t-1}\psi_{t-1}^\top V_{t-1}^{-1/2}(\lambda)\right)  \det(\lambda^{-1/2}V_{t-1}(\lambda)^{1/2}) \\
    & = \det(\lambda^{-1}V_{t-1}(\lambda)) \left(1 + \| \psi_{t-1}\|^2_{V_{t-1}^{-1}(\lambda)}\right),
    \end{align}}%
    having observed that $\det\left(I + V_{t-1}^{-1/2} (\lambda)  \psi_{t-1}\psi_{t-1}^\top V_{t-1}^{-1/2}(\lambda)\right) = 1 + \| \psi_{t-1}\|^2_{V_{t-1}^{-1}(\lambda)} $.
    By unrolling and taking the logarithm, recalling that $\det(\lambda^{-1}V_1(\lambda)) = \det(I) = 1$, we have:
    \begin{align}
        \log \det(\lambda^{-1}V_T(\lambda))  =  \sum_{t=1}^{T-1} \log \left(1 + \| \psi_{t}\|^2_{V_{t}^{-1}(\lambda)}\right).
    \end{align}
    Plugging this equality into Equation~\eqref{eq:c6:1} concludes the proof.
\end{proof}

\vspace{-.4cm}

\begin{lemm}\label{lemma:tech}
    Let $t \in \Nat$, let $S_t$ and $\Hop(\lambda)$ defined as in Theorem~\ref{thr:selfNormalizedComplex}. Under the same assumptions of Theorem~\ref{thr:selfNormalizedComplex}, the following inequalities hold almost surely:
    \begin{align}
        \|S_t\|_{\Hop^{-1}(\lambda)}^2 \le \frac{(t-1)^2K^2R^2}{\lambda}, \qquad \log \det (\lambda^{-1}\Hop(\lambda)) \le (t-1) \log\left( 1 +\frac{K^2R^2}{\lambda} \right).
    \end{align}
\end{lemm}

\vspace{-.4cm}

\begin{proof}
    For the first inequality, we proceed as follows:
    \begin{align}
        \|S_t\|_{\Hop^{-1}(\lambda)}^2 \le \|S_t\|_{\lambda^{-1} I}^2 \le \lambda^{-1} \left(\sum_{s=1}^{t-1} |\epsilon_s| \|\psi_s\| \right)^2 \le \frac{(t-1)^2K^2R^2}{\lambda},
    \end{align}
    having exploited that $\Hop(\lambda) \succeq \lambda I$, $|\epsilon_s| \le R$ a.s., and $\|\psi_s\| \le K$ for every $s \ge 1$.
    Let us denote with $\Psi_t = (\psi_1,\dots,\psi_{t-1})^\top$. For the second inequality, we proceed as follows:
    \begin{align}
        \det & (\lambda^{-1} \Hop(\lambda))  = \det(I + \lambda^{-1}\Psi_t^\top \Psi_t)   = \det(\mathbf{I} + \lambda^{-1}\Psi_t \Psi_t^\top) \\
        & \le  \left(\frac{1}{t-1} \mathrm{tr}(\mathbf{I} + \lambda^{-1} \Psi_t \Psi_t^\top ) \right)^{\!\!t-1} \!\!\!= \!  \left(1 + \frac{\lambda^{-1}}{t-1}\sum_{s=1}^{t-1} \sigma_s^2\|\psi_s\|^2  \right)^{\!\!t-1} \!\!\! \le \! \left(1 \!+ \!\frac{K^2R^2}{\lambda^2} \right)^{\!\!t-1}\!\!\!,
    \end{align}
    having applied the Weinstein-Aronszajn identity \citep{kato2013perturbation}: $\det(I + \lambda^{-1}\Psi_t^\top \Psi_t) = \det(\mathbf{I} + \lambda^{-1}\Psi_t \Psi_t^\top)$ being $\Psi_t^\top \Psi_t$ a trace-class operator, the determinant-trace inequality $\det(\mathbf{A}) \le \left( \mathrm{tr}(\mathbf{A})/d \right)^{d}$ for a symmetric positive semidefinite $d\times d$ matrix $\mathbf{A}$, and bounded $\sigma_s\|\psi_s\|^2 \le R K^2$, being the standard deviation $\sigma_s$ bounded by the range $R$.
\end{proof}


\bibliography{biblio}

\end{document}